\documentclass{article}

\usepackage{arxiv}

\usepackage[utf8]{inputenc} 
\usepackage[T1]{fontenc}    
\usepackage{hyperref}       
\usepackage{url}            
\usepackage{booktabs}       
\usepackage{amsfonts}       
\usepackage{nicefrac}       
\usepackage{microtype}      
\usepackage{lipsum}
\usepackage{xspace}
\usepackage{graphicx}
\usepackage{wrapfig}
\usepackage{amsmath}
\graphicspath{ {./images/} }
\usepackage{xcolor}
\usepackage[noabbrev, capitalise]{cleveref} 
\usepackage{authblk}
\usepackage{float}
\usepackage{ulem}
\usepackage{array}
\usepackage{soul}
\usepackage{subcaption}
\usepackage{listings}
\usepackage{adjustbox}
\usepackage{tabularx}
\usepackage{natbib} 
\usepackage{amsthm}

\newtheorem{theorem}{Theorem}
\newtheorem{assumption}{Assumption}

\newtheorem{lemma}{Lemma}

\usepackage{thmtools,thm-restate}
\lstset{
    basicstyle          =   \ttfamily,
    keywordstyle        =   \ttfamily\color{blue},
    commentstyle        =   \ttfamily\color[rgb]{0,0.6,0},
    stringstyle         =   \ttfamily\color[rgb]{0.58,0,0.82},
    backgroundcolor=\color[rgb]{0.96,0.96,0.96},
    flexiblecolumns,
    numbers             =   left,
    showspaces          =   false,
    numberstyle         =   \zihao{-5}\ttfamily,
    showstringspaces    =   false,
    captionpos          =   t,
    frame               =   lrtb,
    breaklines      =   true,
    columns         =   fixed,
    basewidth       =   0.5em,
}
\usepackage[flushleft]{threeparttable}
\usepackage{booktabs}
\usepackage{makecell}
\usepackage{multirow}

\newcommand{\ABBR}{\text{IN--RIL}\xspace}

\hypersetup{
    colorlinks=true,
    linkcolor=blue,
    filecolor=magenta,      
    urlcolor=blue,
    }

\title{\ABBR: Interleaved Reinforcement and Imitation Learning for Policy Fine-Tuning}
\author{
\begin{tabular}{c}
Dechen Gao$^{1}$, Hang Wang$^{2}$, Hanchu Zhou$^{2}$, Nejib Ammar$^{3}$, Shatadal Mishra$^{3}$, \\
Ahmadreza Moradipari$^{3}$, Iman Soltani$^{4}$, and Junshan Zhang$^{2}$ \\
$^{1}$Department of Computer Science, University of California, Davis \\
$^{2}$Department of Electrical and Computer Engineering, University of California, Davis \\
$^{3}$Toyota InfoTech Labs, Mountain View, CA \\
$^{4}$Department of Mechanical and Aerospace Engineering, University of California, Davis \\
\texttt{\{dcgao,whang,hczhou,isoltani,jazh\}@ucdavis.edu} \\
\texttt{\{nejib.ammar,shatadal.mishra,ahmadreza.moradipari\}@toyota.com}
\end{tabular}
}

\begin{document}
\maketitle

\begin{abstract}

Imitation learning (IL) and reinforcement learning (RL) each offer distinct advantages for robotics policy learning: IL provides stable learning from demonstrations, and  RL promotes generalization through exploration. While existing robot learning approaches using IL-based pre-training followed by RL-based fine-tuning are promising, this two-step learning paradigm often suffers from instability and poor sample efficiency during the RL fine-tuning phase. In this work, we introduce \ABBR, INterleaved Reinforcement learning and Imitation Learning, for policy fine-tuning, which  periodically injects IL updates after multiple RL updates and hence can benefit from the stability of IL and the guidance of expert data for more efficient exploration throughout the entire fine-tuning process. Since IL and RL involve different optimization objectives, we develop gradient separation mechanisms to prevent destructive interference during \ABBR fine-tuning, by separating possibly conflicting gradient updates in orthogonal subspaces. Furthermore, we conduct rigorous analysis, and our findings shed light on why interleaving IL with RL stabilizes learning and improves sample-efficiency. Extensive experiments on 14 robot manipulation and locomotion tasks across 3 benchmarks, including FurnitureBench, OpenAI Gym, and Robomimic, demonstrate that \ABBR can significantly improve sample efficiency and mitigate performance collapse during online finetuning in both long- and short-horizon tasks with either sparse or dense rewards. \ABBR, as a general plug-in compatible with various state-of-the-art RL algorithms, can significantly improve RL fine-tuning, e.g., from 12\% to 88\% with 6.3x improvement in the success rate on Robomimic \texttt{Transport}. Project page: \url{https://github.com/ucd-dare/IN-RIL}.

\end{abstract}

\keywords{Imitation Learning \and Reinforcement Learning \and Robotics Manipulation}

\section{Introduction}

\begin{figure*}[h]
    \centering
    \includegraphics[width=\linewidth]{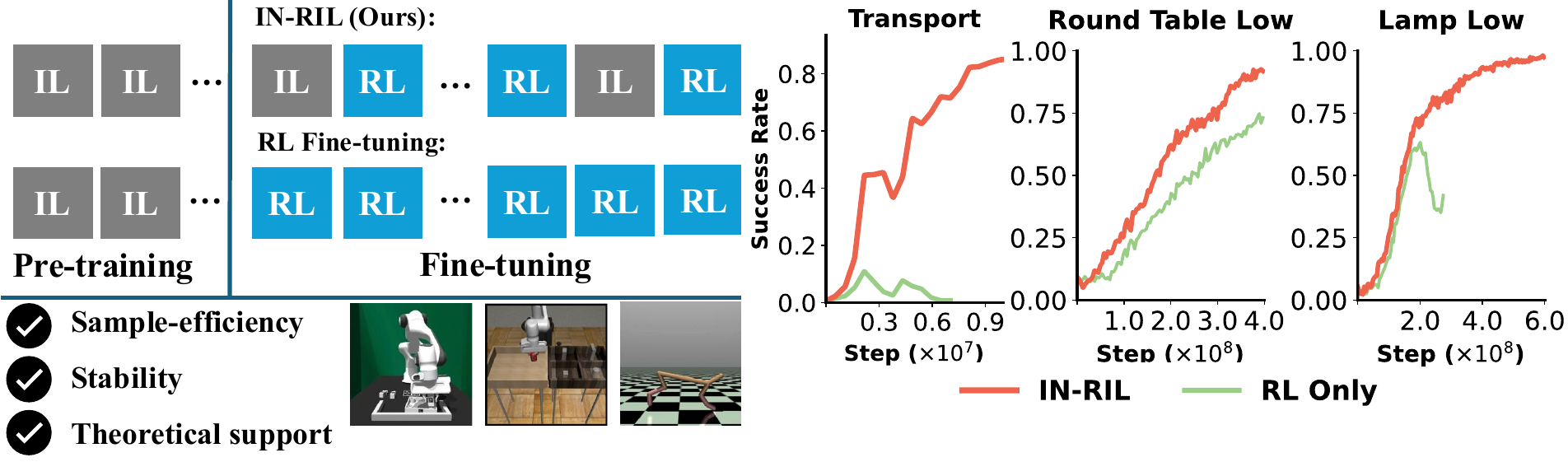}
    \caption{Comparison between \ABBR (interleaved RL/IL) fine-tuning and RL fine-tuning on \texttt{Transport}, \texttt{Round-Table}, and \texttt{Lamp}, which are challenging multi-stage and sparse-reward tasks. Extensive experiments show that IL benefits from expert demonstrations but performance saturates at low success rates; 
    and RL fine-tuning can suffer from stability and poor sample efficiency. \ABBR fine-tuning succeeds to learn and outperforms RL fine-tuning by a significant margin in all tasks.}
    \label{fig:illustration}
\end{figure*}

Recent advances in robotics policy learning have largely been driven by imitation learning (IL) and reinforcement learning (RL)~\cite{chi2023diffusion, fu2024mobile, black2410pi0, wu2023daydreamer}. These two approaches offer complementary strengths for robot learning, yet each comes with limitations when used in isolation. More specifically, in IL (such as behavioral cloning~\cite{shafiullah2022behavior, florence2022implicit}), an agent learns a policy to mimic expert demonstrations, using supervised learning. It is known that while IL provides stable learning dynamics, it faces three critical challenges: the high cost of collecting expert demonstrations~\cite{zhao2024aloha}, limited generalization beyond the demonstration distribution, and vulnerability to compounding errors~\cite{rajeswaran2018learning, ankile2024imitation}. Even small deviations from the demonstration distribution could accumulate and  drastically degrade the  performance. RL approaches, in contrast, learn policies through environmental interaction to maximize accumulated rewards  in a Markov Decision Process (MDP)~\cite{sutton1999policy}. 
Many empirical studies have shown that the
RL approach enables active exploration beyond expert knowledge but often  suffers from instability, sample inefficiency, and hypersensitivity to parameter choices. In particular, these problems are amplified in robotics tasks with sparse rewards and long horizons. For instance, as shown in \Cref{fig:illustration}, the IL method alone yields poor  performance due to the inherent limited coverage of demonstrations, whereas the RL method  struggles to learn effectively through random exploration alone.


To address the above challenges, recent studies ~\cite{ankile2024imitation, ren2024diffusion, nair2020awac, yuan2024policy} have proposed hybrid approaches that combine IL-based initialization with subsequent RL fine-tuning. While this paradigm leverages the unique strengths of both methods, the critical fine-tuning stage using RL alone continues to face significant challenges that limit its effectiveness. Specifically, RL fine-tuning  often suffers from performance collapse, instability, and poor sample efficiency~\cite{nakamoto2023cal, nair2020awac, rajeswaran2018learning}.  Existing approaches may improve fine-tuning by adding demonstrations into replay buffers~\cite{ball2023efficient, song2022hybrid}, which requires reward annotations and complex sampling strategies~\cite{ball2023efficient, hu2023imitation}, or add regularization terms to constrain policy drift~\cite{rajeswaran2018learning, haldar2023watch}, which demands careful hyperparameter tuning. These limitations presents a fundamental question that we aim to address in this work:

\begin{center}\vspace{-0.07in}
    \textit{
How to synergize the stability of IL with the exploration benefits of RL \\ for efficient policy fine-tuning?
}    
\end{center}

\begin{wrapfigure}{r}{0.40\textwidth}\vspace{-0.2in}
\centering
    \includegraphics[width=\linewidth]{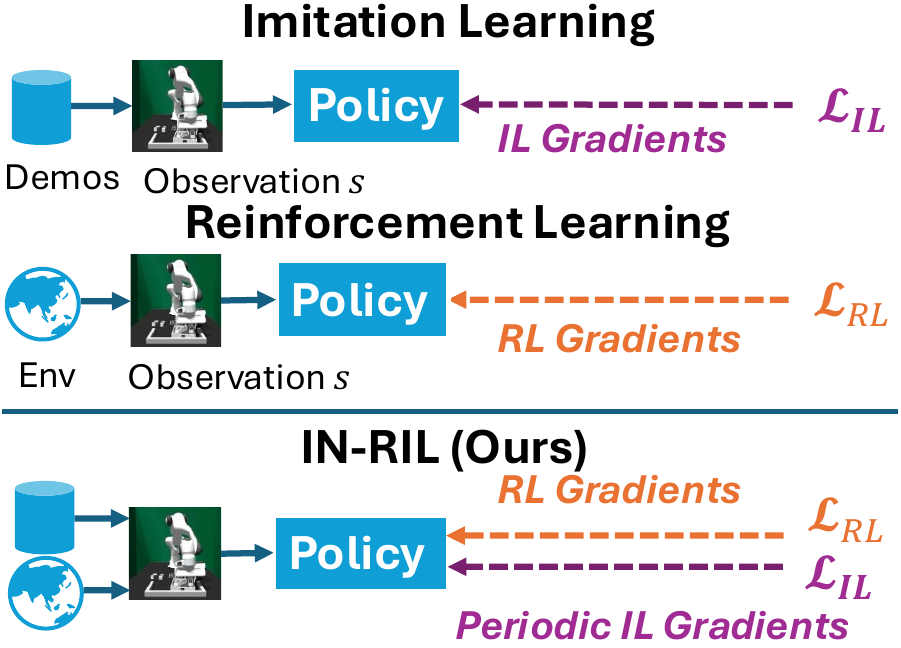}
    \caption{An illustration of \ABBR which updates the policy network with both IL and RL objectives.}
    \label{fig:rl_il_inril_illustration} 
\end{wrapfigure}\vspace{-0in}

Thus motivated, we propose \ABBR (INterleaved Reinforcement and Imitation Learning) fine-tuning that can cleverly exploit demonstration data throughout the fine-tuning process. As illustrated in \Cref{fig:rl_il_inril_illustration}, \ABBR  integrates IL updates with RL fine-tuning by periodically inserting one IL update after every few RL updates. As shown clearly in \Cref{fig:illustration}, \ABBR fine-tuning outperforms RL fine-tuning by a significant margin in the challenging long-horizon and sparse-reward tasks.  We summarize our key insight for IN-RIL as follows.

As illustrated in \Cref{fig:interleaving_mechanism},  IL and RL objectives create different non-convex optimization landscapes, which are often not aligned. \textbf{Both IL and RL have multiple local minima/optima, indicating that when fine-tuning  using RL or IL alone could be trapped at a local minimum.}
By interleaving IL and RL updates during fine-tuning,  \ABBR can help RL to jump out of a lower reward neighborhood towards a higher reward neighborhood, and in the meanwhile RL updates can help to move IL out its local minima in its loss landscape to another local minima with lower losses.

Given that IL and RL involve different optimization landscapes, we caution that it is of critical importance to avoid destructive interference between their respective gradient updates in \ABBR. To address this challenge, we devise gradient separation mechanisms that effectively combine learning signals while preventing conflicts between these different objectives. In particular, we have developed two implementation approaches: (1) gradient surgery~\cite{sener2018multi, jacobian_descent}, which mitigates interference through gradient projection techniques; and (2) network separation, which isolates RL gradients in a residual policy while the base policy continues to leverage IL. Both methods effectively separate IL and RL gradient updates in different subspaces to prevent destructive interactions. It is worth noting that \ABBR is algorithm-agnostic and can serve as a plug-in to existing RL frameworks, as demonstrated through our integration with state-of-the-art methods including DPPO~\cite{ren2024diffusion}, IDQL~\cite{hansen2023idql}, residual PPO~\cite{ankile2024imitation, yuan2024policy}, covering both on-policy and off-policy approaches.

\begin{figure*}[t]
    \centering
    \includegraphics[width=\linewidth]{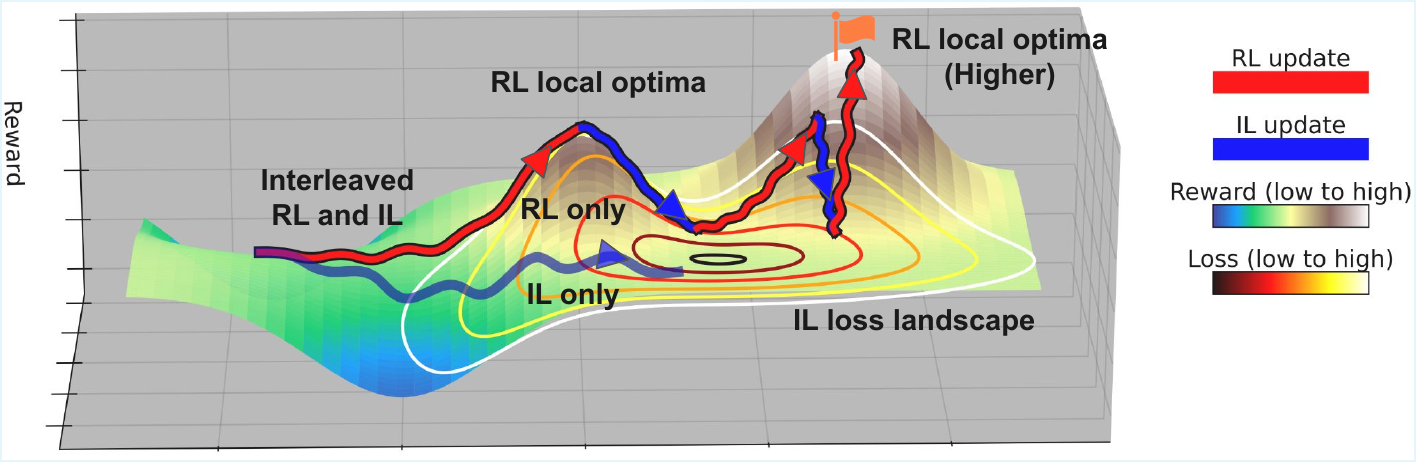}
    \caption{Optimization landscapes for \ABBR. The IL loss landscape, represented by the 3D surface topology and its corresponding contour lines (where each contour connects points of equal IL loss value); and the landscape of RL rewards  (or negative of the loss), represented by the color gradient mapped onto the surface (where the blue-to-white spectrum indicates low-to-high reward values as shown in the legend). IL updates drive the policy toward regions with lower losses, while RL updates steer toward higher rewards. Both optimization processes are stochastic and non-convex with multiple local optima. When using either RL or IL alone, training often converges to suboptimal solutions (as shown in the ``IL only'' and ``RL only'' trajectories). In contrast, our IN-RIL approach enables each objective to help escape the other's local optima: periodic IL updates help RL escape lower-reward regions toward higher-reward neighborhoods, while RL updates help IL traverse between different local minima in the loss landscape. }
    \label{fig:interleaving_mechanism} \vspace{-0in}
\end{figure*} 

{\bf{Summary of Contributions}}  In summary, our work makes the following contributions:
\begin{itemize}
    \item \textbf{\ABBR.} We introduce \ABBR, a fine-tuning approach that periodically interleaves imitation learning updates with reinforcement learning updates, addressing the limitations of conventional two-step methods. Intuitively,  by periodically inserting one IL iteration after every few RL iterations, \ABBR synergizes the stability of IL using expert demonstrations with the exploration capabilities of RL throughout the fine-tuning process.

\item \textbf{Gradient Separation Mechanisms.} Given that IL and RL involve different optimization objectives, we develop gradient separation mechanisms to prevent destructive interference during interleaved training. Our methods effectively separate possibly conflicting gradient updates in orthogonal subspaces, reaping  the benefits of both approaches while minimizing conflicts during fine-tuning. More specifically, we propose two separation techniques: 1) gradient surgery, where RL and IL gradients are projected into independent subspaces to mitigate conflicts; 2) network separation, where \ABBR introduces a residual policy network updated by RL gradients, while base policy is not updated by RL gradients, and therefore, avoids the conflicts.

\item \textbf{Analytic Foundation.} We carry out analysis to characterize the foundational reason why \ABBR outperforms conventional RL fine-tuning in both stability and sample efficiency, offering insights into the optimal interleaving ratio for maximizing performance across diverse robotics tasks.

\item \textbf{Algorithm-Agnostic Design with Comprehensive Validation.} We demonstrate \ABBR's effectiveness as a general plug-in compatible with state-of-the-art RL algorithms, including on-policy methods (DPPO, residual PPO) and off-policy approaches (IDQL). Through extensive experiments on 14 challenging robotics tasks across FurnitureBench~\cite{heo2023furniturebench}, Robomimic~\cite{mandlekar2021matters}, and OpenAI Gym~\cite{brockman2016openai}, we show that \ABBR can substantially improve performance — e.g., boosting success rates to 88\%, when integrated with RL algorithms that originally yield only 12\% success rates on Robomimic \texttt{Transport}. Our evaluations span both long-horizon and short-horizon scenarios with sparse and dense rewards, demonstrating the broad applicability of our approach.
\end{itemize}

\section{Related Work}
\label{sec:related}

\textbf{Robotics Policy Learning and Fine-Tuning.} Imitation learning (IL)~\cite{brohan2022rt, kim2024openvla, chi2023diffusion, fu2024mobile, lee2024interact, florence2022implicit} and reinforcement learning (RL)~\cite{kalashnikov2018scalable, han2023survey, hafner2023mastering, ren2024diffusion, wu2023daydreamer, gao2024cardreamer, ankile2024imitation} have been widely studied in robotics. IL assumes access to expert demonstrations and is generally more stable to train~\cite{chi2023diffusion, shafiullah2022behavior}, but it suffers from distribution shifts and often fails to generalize beyond demonstrations~\cite{rajeswaran2018learning}. In addition, collecting high-quality expert data can be labor-intensive and costly, sometimes requiring hundreds or even thousands of demonstrations per task~\cite{zhao2024aloha} through teleoperation ~\cite{fu2024mobile}, or VR equipments~\cite{chuang2024active}. On the other hand, RL enables agents to explore and self-improve, potentially overcoming IL limitations of labor-intensive data collection and generalization. However, RL is notoriously sample-inefficient~\cite{song2022hybrid}, especially for long-horizon tasks with sparse rewards~\cite{gupta2019relay}, where agents may easily fail to explore and learn. Recent works have proposed combining IL and RL in a two-stage pipeline: IL is first used to pre-train a reasonable policy to warm-start the RL process, followed by RL fine-tuning to further improve generalization via exploration~\cite{ren2024diffusion, ankile2024imitation, hu2023imitation}. The same paradigm was also applied to LLM fine-tuning~\cite{guo2025deepseek}. In this work, we move beyond the two-stage paradigm, and show that the data used for pre-training, even after pre-training plateaus, is still valuable in improving sample-efficiency and stability of RL fine-tuning.

\textbf{RL with Expert Demonstrations.} Recent works have explored leveraging offline data for training RL policies. ROT~\cite{haldar2023watch, rajeswaran2018learning} introduces a regularization term to RL objectives to keep the policy close to expert behaviors, which, however, requires careful balancing between RL objectives and the regularization term. AWAC~\cite{nair2020awac}, Hy--Q~\cite{song2022hybrid}, IBRL~\cite{hu2023imitation}, RLPD~\cite{ball2023efficient}, Cal-QL~\cite{nakamoto2023cal} add expert data with rewards to a replay buffer and perform off-policy updates during online learning. However, it can be infeasible to perform off-policy RL updates on expert demonstrations since reward annotations are not always available. Furthermore, sampling strategy is shown to be crucial for off-policy updates when there are both demonstration data and RL-collected data~\cite{hu2023imitation, ball2023efficient}. In contrast, \ABBR does not introduce explicit regularization terms which rely on delicate loss balancing, and can over-regularize the policy and damage performance. \ABBR does not assume availability of rewards in IL data, or require sampling strategies to balance learning from offline and online data. Instead, it treats IL and RL as complementary optimization processes and interleaves them during fine-tuning without modifying the RL algorithm itself. This makes \ABBR broadly applicable to both on-policy and off-policy RL methods.

\section{\ABBR: Interleaved RL and IL for Efficient Policy Finetuning}

In this section, we provide a theoretical analysis of \ABBR, aiming to answer two key questions: (1) what is the optimal interleaving ratio of RL updates to IL updates that balances learning stability and performance improvement, and (2) How much reduction in  iteration complexity can be achieved by our proposed \ABBR approach? We derive conditions under which \ABBR achieves superior sample efficiency and faster convergence to target performance levels. These theoretical results not only justify our algorithmic design choices but also provide practical guidance for adapting the interleaving ratio based on gradient alignment during training.


\paragraph{Markov Decision Process.} We consider a Markov Decision Process (MDP) defined by the tuple $\mathcal{M} = (\mathcal{S}, \mathcal{A}, P, r, \gamma, \rho_0)$, where $\mathcal{S}$ is the state space, $\mathcal{A}$ is the action space, $P: \mathcal{S} \times \mathcal{A} \times \mathcal{S} \to [0, 1]$ is the transition probability function, $r: \mathcal{S} \times \mathcal{A} \to \mathbb{R}$ is the reward function, $\gamma \in [0, 1)$ is the discount factor, and $\rho_0$ is the initial state distribution. A policy $\pi: S \rightarrow \Delta(A)$ maps states to probability distributions over actions. The action-value function, or Q-function, for a policy $\pi$ is defined as $Q^{\pi}(s, a) = \mathbb{E}{\pi}\left[\sum{t=0}^{\infty} \gamma^t r(s_t, a_t) | s_0 = s, a_0 = a\right]$, representing the expected cumulative discounted reward when taking action $a$ in state $s$ and following policy $\pi$ thereafter. The objective in RL is to find a policy that maximizes the expected Q-value: $\mathbb{E}_{s \sim \rho_0, a \sim \pi(·|s)}[Q^{\pi}(s, a)]$.

\paragraph{Pre-Training.} We consider a parametric policy $\pi_\theta: \mathcal{S} \rightarrow \Delta(\mathcal{A})$ that maps states to distributions over actions. We employ a direct policy representation where $\pi_\theta(a|s)$ gives the probability (or probability density) of taking action $a$ in state $s$. This formulation allows for direct  optimization through gradient-based methods while maintaining sufficient expressivity for complex robotic control tasks. During pre-training, we use behavior cloning to learn a policy that imitates expert demonstrations $\mathcal{D}_{\text{exp}} = \{\tau_1, \tau_2, \ldots, \tau_N\}$, where each trajectory $\tau_i = \{(s_1,a_1),\ldots,(s_T,a_T)\}$ contains state-action pairs. The objective is to maximize the likelihood of expert actions given the corresponding states:
\begin{align}
\mathcal{L}_{\mathrm{IL}}(\theta) = \mathbb{E}_{(s,a^) \sim \mathcal{D}_{\text{exp}}}[-\log\pi_\theta(a \vert s)],
\end{align}
where $a^*$ represents the expert action. This negative log-likelihood objective encourages the policy to assign high probability to actions demonstrated by experts in the same states. We then obtain a warm-start policy $\pi_{0} = \arg\min_{\pi_\theta} \mathcal{L}_{\mathrm{IL}}(\theta)$ that serves as the initialization for subsequent fine-tuning. This pre-training approach allows the policy to capture the basic structure of the task before reinforcement learning is applied to further optimize performance. After obtaining a policy via imitation learning during pre-training, we proceed to the finetuning phase where we optimize the policy. In our analysis, we compare two distinct finetuning approaches, RL Finetuning and our proposed \ABBR. 
\paragraph{RL Finetuning.} After pretraining, RL finetuning directly optimizes policy parameters to maximize the expected Q-value as defined earlier, through gradient updates of the form:
\begin{align*}
\theta_{t+1} = \theta_t - \alpha_{\mathrm{RL}} \nabla_\theta \mathcal{L}_{\mathrm{RL}}(\theta_t)
\end{align*}
where $\alpha_{\mathrm{RL}}$ is the learning rate and $\mathcal{L}_{\mathrm{RL}}(\theta) = -\mathbb{E}_{s \sim d^{\pi_\theta}}[Q^{\pi_\theta}(s, \pi_\theta(s))]$ is defined as the loss function, which represents the negative of the expected Q-value under the current policy's state distribution $d^{\pi_\theta}$. This formulation directly connects to our optimization objective of maximizing $\mathbb{E}_{s \sim \rho_0, a \sim \pi(·|s)}[Q^{\pi}(s, a)]$, but accounts for the evolving state distribution as the policy improves. While this approach aims to maximize the overall reward, it often suffers from instability and poor sample efficiency, particularly when finetuning complex models like diffusion policies.

\paragraph{\ABBR.}  As depicted in \Cref{fig:illustration}, the proposed \ABBR systematically alternates between IL and RL updates:
\begin{align*}
\theta_{t+\frac{1}{1+m(t)}} &= \theta_{t} - \alpha_{\mathrm{IL}} \nabla_\theta \mathcal{L}_{\mathrm{IL}}(\theta_{t}) \\
\theta_{t+\frac{1+j}{1+m(t)}} &= \theta_{t+\frac{j}{1+m(t)}} - \alpha_{\mathrm{RL}} \nabla_\theta \mathcal{L}_{\mathrm{RL}}(\theta_{t+\frac{j}{1+m(t)}}), \quad j \in \{1,\ldots,m(t)\}
\end{align*}
where $m(t)$ represents the iteration-dependent number of RL updates performed after each IL update. The IL updates help maintain the desirable behaviors from pre-training while providing regularization, and the RL updates improve performance on the target task.

Our analysis uses standard assumptions regarding the pretraining performance, data coverage, smoothness properties of the loss functions, and gradient estimation quality. Specifically, we assume that: (1) the initial policy obtained by pretraining results in a training loss within a bounded distance from  the IL objective; (2) the expert demonstration dataset provides reasonably sufficient coverage of the relevant state space for the target task; (3) both the IL  and RL objectives satisfy smoothness conditions; and (4) the stochastic gradient estimates for both objectives have bounded variance that decreases proportionally with batch size. The formal statements of these assumptions (Assumptions \ref{asu:pre}-\ref{asu:variance}) and their implications are provided in Appendix A.

Next, we introduce the assumptions on the  geometric relationship between the gradients of the IL and RL objectives in \Cref{asu:gradient}. In particular, we use the parameter $\rho(t)$ to capture the cosine similarity between these gradients, with positive values indicating opposing gradients and negative values indicating aligned gradients. Such assumption has been commonly used in multi-objective optimization \cite{sener2018multi,desideri2012multiple}.

\begin{assumption}[Gradient Relationship]
In the finetuning regime, the gradients of IL and RL objectives exhibit the following relationship:
\begin{align*}
    \langle\nabla_\theta\mathcal{L}_{\mathrm{IL}}(\theta_t), \nabla_\theta\mathcal{L}_{\mathrm{RL}}(\theta_t)\rangle = -\rho(t)\|\nabla_\theta\mathcal{L}_{\mathrm{IL}}(\theta_t)\| \cdot \|\nabla_\theta\mathcal{L}_{\mathrm{RL}}(\theta_t)\|
\end{align*}
where $\rho(t) \in [-1, 1]$ represents the time-varying relationship between gradients, with positive values indicating opposition (negative cosine similarity) and negative values indicating alignment (positive cosine similarity). \label{asu:gradient}
\end{assumption}

Based on these assumptions, we establish the following key results on the optimal ratio of RL updates to IL updates in the proposed \ABBR. This ratio is crucial for balancing the stability provided by IL updates with the performance improvements offered by RL updates.

\begin{theorem}[Optimal Interleaving Ratio]\label{thm:optimal}
Under Assumptions \ref{asu:gradient}-\ref{asu:variance}, at iteration $t$, the optimal ratio $m(t)$ for \ABBR satisfies $ m_{\text{opt}}(t) \geq 1$.
\end{theorem}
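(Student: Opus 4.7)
The plan is to characterize the expected per-cycle descent in $\mathcal{L}_{\mathrm{RL}}$ as a function of $m(t)$, and then to argue that any maximizer of this descent rate must satisfy $m\geq 1$. First I would apply the smoothness descent lemma to the single IL sub-step and to each of the $m$ RL sub-steps, take expectations using the bounded-variance Assumption~\ref{asu:variance}, and substitute the gradient-alignment Assumption~\ref{asu:gradient}. This yields an upper bound of the form
\begin{align*}
\mathbb{E}\bigl[\mathcal{L}_{\mathrm{RL}}(\theta^{+}) - \mathcal{L}_{\mathrm{RL}}(\theta_t)\bigr] \leq \alpha_{\mathrm{IL}}\,\rho(t)\,G_{\mathrm{IL}}(t)\,G_{\mathrm{RL}}(t) + \tfrac{L_{\mathrm{RL}}\alpha_{\mathrm{IL}}^{2}}{2}\bigl(G_{\mathrm{IL}}^{2}(t) + \sigma^{2}/B\bigr)
\end{align*}
for the IL step, and a standard expected decrease of at least $\alpha_{\mathrm{RL}}(1 - L_{\mathrm{RL}}\alpha_{\mathrm{RL}}/2)\,G_{\mathrm{RL}}^{2}(t) - O(\alpha_{\mathrm{RL}}^{2}\sigma^{2}/B)$ for each RL step, where $G_{\mathrm{IL}}(t),\,G_{\mathrm{RL}}(t)$ denote the true gradient norms at the start of the cycle.

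Next I would aggregate one IL sub-step with $m$ RL sub-steps, treating the gradient norms as approximately constant within a single cycle (a controllable approximation via smoothness). This yields an expected per-cycle drop of the form $-\Delta(m,t) = m\,b(t) - a(t)$, where $b(t)>0$ measures the per-RL-step progress and $a(t)$ captures the net RL-loss cost of the single IL step. Defining the per-iteration descent rate $R(m,t) = -\Delta(m,t)/(1+m)$, the optimal ratio is $m_{\mathrm{opt}}(t) = \arg\max_{m\in\mathbb{Z}_{\geq 0}} R(m,t)$, and a short calculation gives $R(1,t) - R(0,t) = \tfrac{1}{2}\bigl(a(t)+b(t)\bigr)$, reducing the theorem to showing this difference is nonnegative.

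To close the argument I would split into two regimes. When $\rho(t)\geq 0$ (opposing gradients), both $a(t)$ and $b(t)$ are nonnegative and the inequality is immediate. When $\rho(t)<0$ (aligned gradients), $a(t)$ may be negative, and the key is to invoke Assumption~\ref{asu:pre} to upper-bound $G_{\mathrm{IL}}(t)$ after pretraining, combined with the strict positivity of each RL step's contribution, to conclude $b(t)\geq |a(t)|$ and hence $m_{\mathrm{opt}}(t)\geq 1$. The main obstacle I expect is handling the aligned-gradient regime cleanly while keeping the $O(\alpha^{2}\sigma^{2}/B)$ variance and smoothness remainder terms subdominant to the first-order alignment signal; threading the argument through both regimes requires the batch-size scaling in Assumption~\ref{asu:variance} together with a small enough step size to suppress the quadratic error, which is what ensures $R(1,t)\geq R(0,t)$ holds robustly rather than only to leading order.
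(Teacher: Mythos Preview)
Your approach mirrors the paper's: derive the per-cycle expected RL-loss decrease via the smoothness descent lemma applied to one IL sub-step and $m$ RL sub-steps, normalize by $1+m$ to obtain a per-update rate $R(m,t) = (m\,b(t)-a(t))/(1+m)$, and analyze its monotonicity in $m$. The paper computes $R'(m) = (A-C+B)/(1+m)^{2}$ directly and observes this has constant sign; your computation of $R(1,t)-R(0,t)=\tfrac{1}{2}(a(t)+b(t))$ is equivalent, since the sign of $a+b$ is exactly the sign of $R'$ and hence determines whether $m_{\mathrm{opt}}\geq 1$ or $m_{\mathrm{opt}}=0$.

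Where you go further than the paper is in attempting to \emph{prove} $a(t)+b(t)\geq 0$ by casework on $\rho(t)$. The $\rho(t)\geq 0$ case is fine (modulo the standard implicit assumption that $b(t)>0$, i.e., that each RL step makes net progress, which the paper also assumes when it writes ``$A>C$''). The $\rho(t)<0$ case, however, has a real gap: Assumption~\ref{asu:pre} bounds $\mathcal{L}_{\mathrm{IL}}(\theta_{0})-\mathcal{L}_{\mathrm{IL}}(\theta^{*})$, not $\|\nabla\mathcal{L}_{\mathrm{IL}}(\theta_{t})\|$ at arbitrary $t$, and there is no PL-type assumption in the paper linking loss gaps to gradient norms. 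Even at $t=0$, a small function-value gap does not imply a small gradient under smoothness alone. So the route to $b(t)\geq |a(t)|$ via ``$G_{\mathrm{IL}}$ is small after pretraining'' does not close without structure that is not assumed.

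For calibration: the paper itself does not establish the sign condition $A-C+B\geq 0$ in generality. After the monotonicity calculation it pivots to a heuristic balancing argument and ultimately \emph{defines} $m_{\mathrm{opt}}(t)=\max\bigl\{1,\sqrt{\cdots}\bigr\}$, with the lower bound $1$ built into the formula by construction rather than derived. So your setup, your per-update rate, and your $\rho\geq 0$ case are at least as rigorous as what the paper actually proves; the obstacle you correctly anticipated in the aligned-gradient regime is genuine, and the paper handles it by fiat rather than by argument.
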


\Cref{thm:optimal} provides a principled formula for adapting the interleaving ratio throughout training based on current gradient information. The optimal ratio $m_{\text{opt}}(t)$ increases when gradients strongly ``oppose'' each other ($\rho(t) > 0$) and decreases when they are more aligned ($\rho(t) > 0$), reflecting the intuition that more RL updates are needed to make progress when IL updates work against the RL objective. This result suggests that monitoring gradient alignment during training can lead to more efficient optimization strategies compared to using a fixed interleaving ratio. Given this optimal ratio, we next quantify exactly how much more efficient \ABBR can be compared to RL-only approaches. Denote  $\Delta_{\mathrm{IL-RL}} =- \sum_{t=0}^{T-1}\frac{c_{\mathrm{IL}}\rho(t)}{L_{\mathrm{IL}}}\|\nabla\mathcal{L}_{\mathrm{IL}}(\theta_t)\| \cdot \|\nabla\mathcal{L}_{\mathrm{RL}}(\theta_t)\| - \frac{c^2_{\mathrm{IL}}\sigma^2_{\mathrm{IL}}T}{2L_{\mathrm{IL}}N_{\mathrm{IL}}}$.Then we have:

\begin{theorem}[Iteration  Complexity of \ABBR]
Under Assumptions \ref{asu:gradient}-\ref{asu:variance}, for a fixed computational budget of $T$ total updates, \ABBR with $m > 1$ and $\Delta_{\mathrm{IL-RL}} > \frac{L_{\mathrm{RL}}(\mathcal{L}_{\mathrm{RL}}(\theta_0) - \mathcal{L}^*_{\mathrm{RL}})}{m+1}$ requires fewer iterations to reach a target accuracy $\epsilon$ than RL-only finetuning, i.e., $\frac{T_{\text{RL-only}}}{T_{\text{\ABBR}}} > 1 $.

\label{thm:efficiency}
\end{theorem}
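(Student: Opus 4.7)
The plan is to derive matching iteration-complexity bounds on the RL objective for both schemes using the $L_{\mathrm{RL}}$-smoothness descent lemma, and then read off the ratio $T_{\text{RL-only}}/T_{\text{\ABBR}}$ from the comparison. The whole argument is a two-way accounting: each IL sub-step perturbs the RL loss through the cross-gradient term permitted by \Cref{asu:gradient}, while each RL sub-step produces the usual first-order decrease plus a stochastic error; summing these across all sub-steps gives a telescoping inequality from $\mathcal{L}_{\mathrm{RL}}(\theta_0)$ to $\mathcal{L}^*_{\mathrm{RL}}$ whose slack bounds the number of iterations.

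First, for RL-only fine-tuning I would apply the standard non-convex SGD descent lemma under the assumed $L_{\mathrm{RL}}$-smoothness and bounded-variance conditions (Assumptions~2--4 of the appendix). With the stepsize $\alpha_{\mathrm{RL}} = c_{\mathrm{RL}}/L_{\mathrm{RL}}$ this gives, after telescoping over $T_{\text{RL-only}}$ iterations,
\begin{equation*}
\min_{t < T_{\text{RL-only}}} \|\nabla\mathcal{L}_{\mathrm{RL}}(\theta_t)\|^2 \;\leq\; \frac{2L_{\mathrm{RL}}(\mathcal{L}_{\mathrm{RL}}(\theta_0) - \mathcal{L}^*_{\mathrm{RL}})}{(2c_{\mathrm{RL}}-c_{\mathrm{RL}}^2)\,T_{\text{RL-only}}} + \frac{c_{\mathrm{RL}}\,\sigma^2_{\mathrm{RL}}}{(2-c_{\mathrm{RL}})N_{\mathrm{RL}}},
\end{equation*}
so that reaching accuracy $\epsilon$ requires $T_{\text{RL-only}} \gtrsim \frac{2L_{\mathrm{RL}}(\mathcal{L}_{\mathrm{RL}}(\theta_0)-\mathcal{L}^*_{\mathrm{RL}})}{\tilde\epsilon}$ for the effective noise-reduced tolerance $\tilde\epsilon$.

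Second, for \ABBR I would analyse a single interleaving block of $m+1$ updates. Expanding $\mathcal{L}_{\mathrm{RL}}$ around $\theta_t$ and using its $L_{\mathrm{RL}}$-smoothness together with \Cref{asu:gradient}, the IL sub-step contributes
\begin{equation*}
\mathcal{L}_{\mathrm{RL}}(\theta_{t+1/(m+1)}) - \mathcal{L}_{\mathrm{RL}}(\theta_t) \;\leq\; \tfrac{c_{\mathrm{IL}}\rho(t)}{L_{\mathrm{IL}}}\|\nabla\mathcal{L}_{\mathrm{IL}}(\theta_t)\|\,\|\nabla\mathcal{L}_{\mathrm{RL}}(\theta_t)\| + \tfrac{L_{\mathrm{RL}}c_{\mathrm{IL}}^2}{2L_{\mathrm{IL}}^2}\bigl(\|\nabla\mathcal{L}_{\mathrm{IL}}(\theta_t)\|^2 + \sigma^2_{\mathrm{IL}}/N_{\mathrm{IL}}\bigr),
\end{equation*}
while each of the $m$ RL sub-steps gives the usual $-\tfrac{1}{2L_{\mathrm{RL}}}\|\nabla\mathcal{L}_{\mathrm{RL}}\|^2 + \tfrac{\sigma^2_{\mathrm{RL}}}{2L_{\mathrm{RL}}N_{\mathrm{RL}}}$ contribution. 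Summing all blocks across $T_{\text{\ABBR}}$ iterations, the aggregate IL contribution to the telescoped inequality is exactly $-\Delta_{\mathrm{IL-RL}}$ (by its definition), so
\begin{equation*}
\frac{m}{2L_{\mathrm{RL}}(m+1)} \sum_{t=0}^{T_{\text{\ABBR}}-1} \|\nabla\mathcal{L}_{\mathrm{RL}}(\theta_t)\|^2 \;\leq\; \mathcal{L}_{\mathrm{RL}}(\theta_0) - \mathcal{L}^*_{\mathrm{RL}} \;-\; \Delta_{\mathrm{IL-RL}} \;+\; \tfrac{m\,\sigma^2_{\mathrm{RL}}T_{\text{\ABBR}}}{2L_{\mathrm{RL}}(m+1)N_{\mathrm{RL}}}.
\end{equation*}
Dividing by $T_{\text{\ABBR}}$ and solving for the iteration count to reach the same $\tilde\epsilon$ gives the \ABBR bound. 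The hypothesis $\Delta_{\mathrm{IL-RL}} > L_{\mathrm{RL}}(\mathcal{L}_{\mathrm{RL}}(\theta_0)-\mathcal{L}^*_{\mathrm{RL}})/(m+1)$ together with $m > 1$ then makes the \ABBR numerator strictly smaller than the RL-only numerator, yielding $T_{\text{RL-only}}/T_{\text{\ABBR}} > 1$.

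The main obstacle will be the bookkeeping for the cross-gradient term: the sign of $\rho(t)$ can flip over $t$, so I cannot simply drop negative contributions, and I need to avoid double-counting the IL-induced second-order term in the $\|\nabla\mathcal{L}_{\mathrm{IL}}\|^2$ piece when I match against the definition of $\Delta_{\mathrm{IL-RL}}$. A cleaner route may be to absorb $\tfrac{L_{\mathrm{RL}}c_{\mathrm{IL}}^2}{2L_{\mathrm{IL}}^2}\|\nabla\mathcal{L}_{\mathrm{IL}}\|^2$ using the IL descent inequality (so that it is paid for by guaranteed IL progress) and to keep only the variance floor $\sigma^2_{\mathrm{IL}}/N_{\mathrm{IL}}$ in $\Delta_{\mathrm{IL-RL}}$. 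A secondary subtlety is ensuring that \Cref{thm:optimal} (i.e., $m_{\mathrm{opt}}\geq 1$) is compatible with the condition $m>1$ used here — the proof should note that any $m$ in the regime implied by \Cref{thm:optimal} satisfies the descent comparison above, so the two results are consistent rather than in tension.
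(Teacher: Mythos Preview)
Your proposal is correct and follows essentially the same route as the paper: both derive the RL-only and \ABBR iteration complexities from the $L_{\mathrm{RL}}$-smoothness descent lemma (the paper packages these as the separate convergence Theorems~\ref{thm:covRL} and~\ref{thm:covINT} and then quotes them, whereas you re-derive them inline), identify the aggregate IL contribution with $-\Delta_{\mathrm{IL-RL}}$, and compare the resulting numerators to obtain the ratio $\frac{\bar m}{1+\bar m}\cdot\frac{L_{\mathrm{RL}}(\mathcal{L}_{\mathrm{RL}}(\theta_0)-\mathcal{L}^*_{\mathrm{RL}})}{L_{\mathrm{RL}}(\mathcal{L}_{\mathrm{RL}}(\theta_0)-\mathcal{L}^*_{\mathrm{RL}})-\Delta_{\mathrm{IL-RL}}}$. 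Your anticipated obstacle---absorbing the $\|\nabla\mathcal{L}_{\mathrm{IL}}\|^2$ second-order term via the IL descent inequality---is exactly how the paper handles it, and the compatibility remark about \Cref{thm:optimal} is not needed for the argument.
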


Theorem \ref{thm:efficiency} establishes the conditions under which \ABBR achieves superior  efficiency compared to RL-only finetuning. Specifically, when the regularization benefit $\Delta_{\mathrm{IL-RL}}$ exceeds the threshold $\frac{L_{\mathrm{RL}}(\mathcal{L}_{\mathrm{RL}}(\theta_0) - \mathcal{L}^*_{\mathrm{RL}})}{m+1}$, \ABBR requires fewer total updates to reach the same performance level. This threshold depends critically on the interleaving ratio $m$, with higher values of $m$ reducing the required regularization benefit for efficiency gain. Intuitively, this means that when the stabilizing effect of periodically revisiting the demonstration data is sufficiently strong, and the interleaving ratio is properly set, \ABBR can achieve the same performance with fewer total updates. This theoretical guarantee aligns with our empirical observations across multiple robotics tasks, where \ABBR consistently demonstrates faster convergence and higher sample efficiency than pure RL approaches. The result provides formal justification for the \ABBR and offers practical guidance for setting the interleaving ratio based on task characteristics.


\section{Experiments}

Based on the above analysis, we further conduct a comprehensive empirical evaluation to address two key questions: 1) What are the benefits of \ABBR compared to RL fine-tuning? 2) What is the impact of the interleaving ratio $m$ on the  performance? To this end, we evaluate \ABBR on 14 different tasks across three widely adopted robotics benchmarks, including FurnitureBench~\cite{heo2023furniturebench}, OpenAI Gym~\cite{brockman2016openai}, and Robomimic~\cite{mandlekar2021matters}. These benchmarks represent a diverse spectrum of robotics challenges, encompassing both locomotion and manipulation tasks with varying reward structures (sparse and dense) and time horizons (short and long).

\textbf{Robomimic~\cite{mandlekar2021matters}.} We evaluate \ABBR on four robot manipulation tasks from Robomimic: \texttt{Lift}, \texttt{Can}, \texttt{Square}, and \texttt{Transport}. Among these, \texttt{Square} and \texttt{Transport} are particularly challenging for RL agents~\cite{ren2024diffusion}. All tasks feature sparse rewards upon successful completion, with each task providing 300 demonstrations. For \texttt{Transport} and \texttt{Lift}, we specifically use noisy multi-human demonstration data to test robustness. Notably, when coupled with \ABBR, IDQL, one of the best off-policy fine-tuning algorithms, achieves only $12\%$ success rates on \texttt{Transport}, while \ABBR boosts it to $88\%$, a $6.3\times$ improvement.


\textbf{FurnitureBench~\cite{heo2023furniturebench}.} FurnitureBench presents the most challenging tasks in our experiments, featuring long-horizon, multi-stage manipulation tasks with sparse rewards. We include three assembly tasks: \texttt{One-Leg}, \texttt{Lamp}, and \texttt{Round-Table}, each with both \texttt{Low} and \texttt{Med} randomness settings for state distributions. Each task includes 50 human demonstrations and provides sparse stage-completion rewards. We additionally incorporate two tasks from ResiP~\cite{ankile2024imitation}: \texttt{Mug-Rack} and \texttt{Peg-in-Hole}, resulting in a total of 7 tasks when accounting for randomness variants.


\textbf{OpenAI Gym~\cite{brockman2016openai}.}  To evaluate performance on dense-reward tasks, we include three classic locomotion benchmarks: \texttt{Hopper} (v2), \texttt{Walker2D} (v2), and \texttt{HalfCheetah} (v2). For these tasks, we utilize the medium-level imitation datasets from D4RL~\cite{fu2020d4rl}.

\subsection{Training}

We evaluate \ABBR with multiple policy parameterizations for pre-training, including diffusion policy (DP)\cite{chi2023diffusion} and Gaussian policy\cite{sutton1999policy}, both of which are widely adopted in recent IL and RL literature~\cite{chi2023diffusion, zhao2024aloha, ren2024diffusion, ankile2024imitation}. Particularly, DP has consistently demonstrated superior performance across robotics tasks in both pre-training~\cite{chi2023diffusion} (see Table~\ref{tab:pretrain_results_furniture}) and fine-tuning~\cite{ren2024diffusion}. We employ action chunking~\cite{fu2024mobile} to enhance temporal consistency. For fine-tuning, we select three state-of-the-art RL algorithms spanning both on-policy and off-policy approaches: 1) PPO~\cite{schulman2017proximal, ankile2024imitation, yuan2024policy}, a widely used on-policy algorithm; 2) DPPO~\cite{ren2024diffusion}, an on-policy, policy gradient-based RL algorithm; and 3) IDQL~\cite{florence2022implicit}, an off-policy, Q-learning-based RL algorithm. DPPO and IDQL are both DP-based RL algorithms. This diverse selection enables us to comprehensively evaluate \ABBR's effectiveness across different RL algorithms and policy parameterizations.

\paragraph{Pre-Training.} Taking FurnitureBench as an example, we pre-train different policy parameterizations using 50 demonstrations with IL until convergence. As shown in Table~\ref{tab:pretrain_results_furniture}, Gaussian policy without action chunking fails entirely on these challenging multi-stage sparse-reward tasks, while Gaussian policy with action chunking achieves limited success. DP demonstrates the strongest overall performance across all tasks in FurnitureBench, Robomimic, and Gym. However, even DP pre-training remains sub-optimal, with 3 tasks showing below $5\%$ success rates after loss plateau, primarily due to limited dataset coverage. 

\begin{table}[h!]
\centering
\small
\setlength{\tabcolsep}{3pt}
\begin{tabular}{l|lcc|cc|c|c|c|c}
\toprule
 & \textbf{Policy Parameterization} & \multicolumn{2}{c|}{\texttt{OneLeg}} & \multicolumn{2}{c|}{\texttt{Lamp}} & \texttt{RoundTable} & \texttt{MugRack} & \texttt{PegInHole} \\
& & \texttt{Low} & \texttt{Med} & \texttt{Low} & \texttt{Med} & & & \\
\midrule
\multirow{3}{*}{BC} 
    & Gaussian w/ Action Chunking & 0.38 & 0.17 & 0.07 & 0.02 & 0.01 & 0.14 & 0.02 \\
    & Gaussian w/o Action Chunking & 0.0 & 0.0 & 0.0 & 0.0 & 0.0 & 0.0 & 0.0 \\
    & DP     & \textbf{0.47} & \textbf{0.28} & 0.05 & \textbf{0.1} & \textbf{0.10} & \textbf{0.19} & \textbf{3} \\
\bottomrule
\end{tabular}
\caption{Success rates across FurnitureBench tasks ~\cite{ankile2024imitation, heo2023furniturebench} using pre-trained policies.}
\label{tab:pretrain_results_furniture}
\end{table}

\paragraph{Fine-Tuning.} While DP yields the best pre-training performance, fine-tuning DP with conventional RL algorithms presents significant challenges and can lead to failure~\cite{ren2024diffusion, yang2023policy}. We consider two strategies for RL fine-tuning: 1) \textit{Full network fine-tuning}, where we use specialized RL algorithms (DPPO and IDQL) to fine-tune the entire pre-trained DP network; and 2) \textit{Residual policy fine-tuning}, where we introduce an additional Gaussian policy as a residual policy on top of the pre-trained DP (base) policy. The residual policy, implemented as an MLP network, is fine-tuned with conventional RL (PPO)~\cite{schulman2017proximal, ankile2024imitation} while the base policy is updated solely with IL. The residual policy learns to adjust the base policy's actions at each time step. For each task, we fine-tune the pre-trained DP checkpoint with the highest success rate (or reward) using \ABBR, and compare against RL-only fine-tuning. While our theory suggests an adaptive ratio $m(t)$, we use a constant value of $m$ throughout training for simplicity. Based on our results, values of $m$ between 5 and 15 work well across most tasks, balancing performance improvement with policy stability. We conduct a detailed ablation study on the impact of different $m$ values in Section~\ref{sec:ablation}. 

 \begin{wrapfigure}{r}{0.27\textwidth} \vspace{-0.35in}
    \centering
    \includegraphics[width=0.7\linewidth]{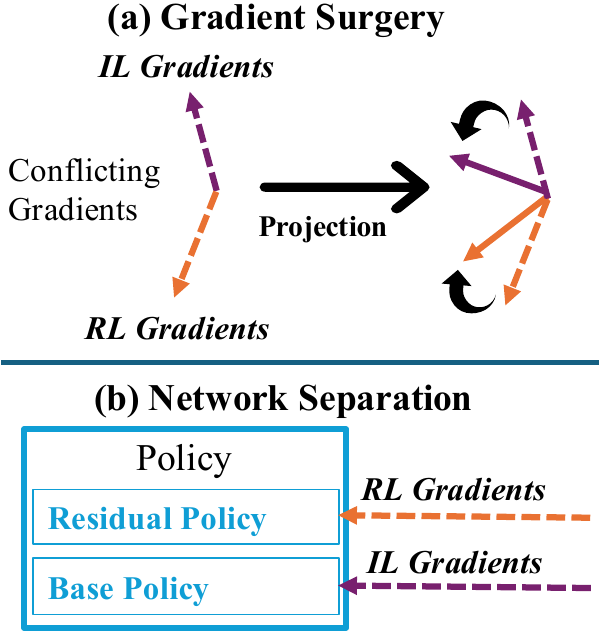}
    \caption{An illustration of the two gradient separation mechanisms: a) gradient surgery, and b) network separation.} \label{fig:gradient_separation_illustration}
\end{wrapfigure}

\paragraph{Separation of RL and IL gradients for \ABBR.} RL and IL each operate within distinct optimization landscapes, meaning a policy that is optimal from an RL perspective (high rewards) may not be optimal from an IL perspective (low BC losses), and vice versa. Directly updating a single network with these potentially conflicting objectives can degrade policy performance (as demonstrated in our ablation study in Section~\ref{sec:ablation_gradient_separation}).

To address this challenge, as illustrated in \Cref{fig:gradient_separation_illustration}, we introduce two gradient separation techniques that prevent interference between RL and IL objectives. The first technique, 1) \textit{gradient surgery}, projects each gradient onto the dual cone~\cite{jacobian_descent}, ensuring that updates benefit both individual objectives. The second technique, 2) \textit{network separation}, is naturally integrated with the residual RL fine-tuning strategy. This approach allocates IL gradients to the base policy while RL gradients update the residual policy, effectively mitigating interference.

\subsection{\ABBR vs. RL Fine-tuning}
We demonstrate that \ABBR can enhance the performance of state-of-the-art RL fine-tuning algorithms across diverse robotic tasks. For each benchmark, we select the best-performing RL algorithms according to recent literature: DPPO~\cite{ren2024diffusion} and IDQL~\cite{hansen2023idql} for Robomimic and Gym tasks, and residual PPO~\cite{ankile2024imitation} for FurnitureBench. Our comprehensive evaluation reveals that \ABBR consistently outperforms these top-performing algorithms in terms of sample efficiency, stability, and final performance. The results for Robomimic and Gym tasks using DPPO and IDQL are presented in Figure~\ref{fig:dppo_perf} and Figure~\ref{fig:idql_perf}, respectively. FurnitureBench results are shown in ~\Cref{fig:residual_furniture_perf}.

We also compare \ABBR with other RL fine-tuning algorithms in ~\Cref{tab:robomimic_gym} and ~\Cref{tab:furniture_residual}. The other baselines include DPPO augmented by BC loss regularization~\cite{rajeswaran2018learning} (denoted as ``BC Loss" in the table), AWC~\cite{peng2019advantage, ren2024diffusion}, and DIPO~\cite{yang2023policy}.

\begin{figure}[H]
    \centering
    \includegraphics[width=\linewidth]{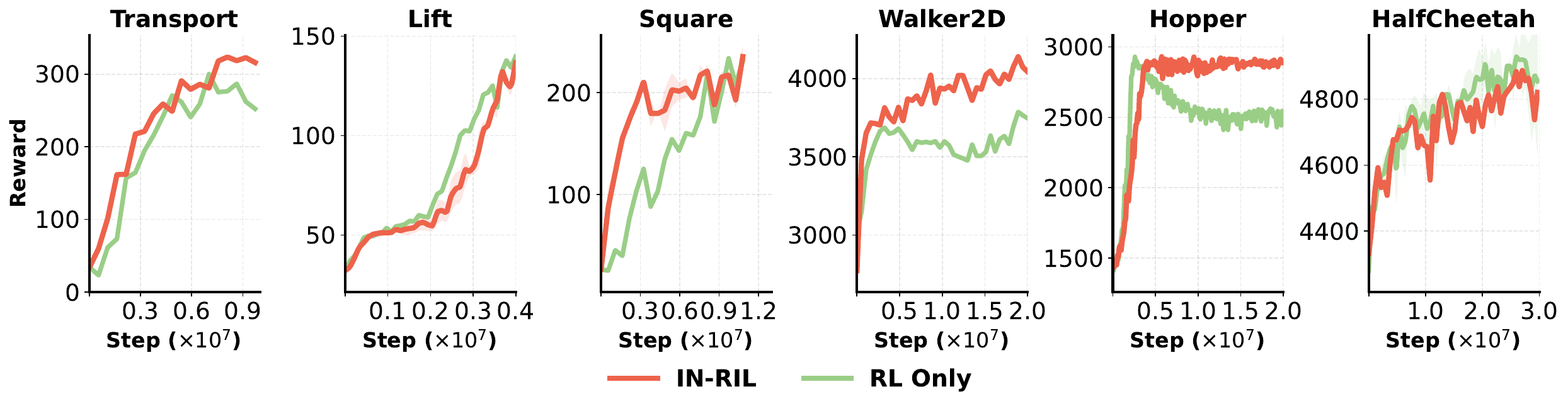}
    \caption{Comparing \ABBR with RL fine-tuning on Robomimic and Gym using DPPO.}
    \label{fig:dppo_perf}
\end{figure}

\begin{figure}[H]
    \centering
    \includegraphics[width=\linewidth]{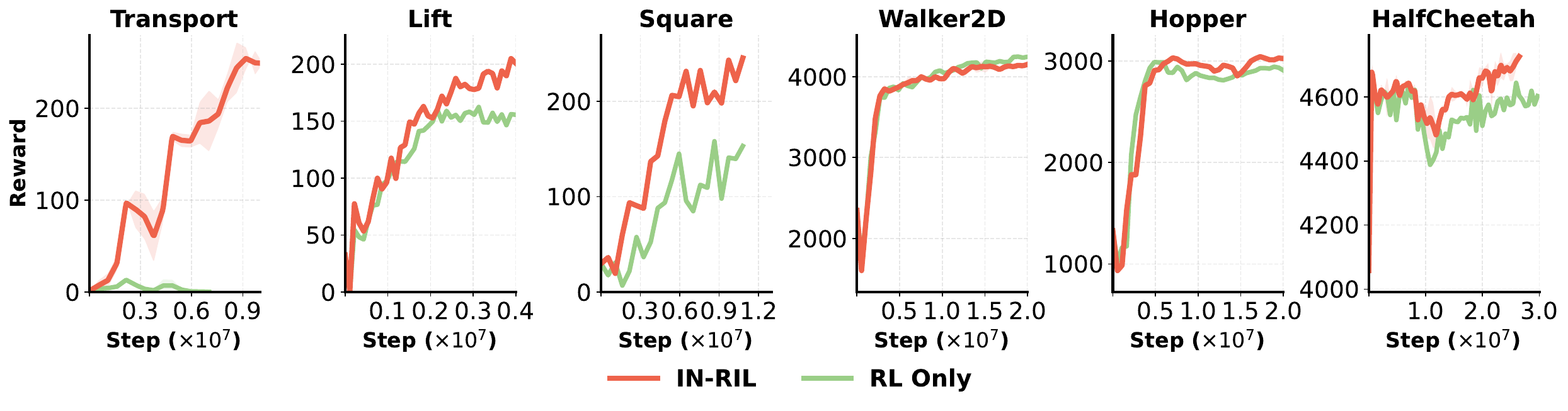}
    \caption{Comparing \ABBR with RL fine-tuning on Robomimic and Gym using IDQL.}
    \label{fig:idql_perf}
\end{figure}

\begin{table}[H]
\centering

\begin{tabular}{l|cc|cc|ccc}
\toprule
Task & IN-RIL (DPPO) & DPPO & IN-RIL (IDQL) & IDQL & BC Loss & DIPO & AWR \\
\midrule
Transport & \textit{\textbf{0.95}} & 0.89 & \textbf{0.88} & 0.12 & 0.41 & 0.16 & 0.16 \\
Can & \textit{\textbf{1.00}} & \textit{\textbf{1.00}} & 0.98 & \textit{\textbf{1.00}} & 0.96 & 0.94 & 0.65 \\
Lift & \textit{\textbf{1.00}} & \textit{\textbf{1.00}} & \textit{\textbf{1.00}} & 1.00 & 0.98 & 0.97 & 0.99 \\
Square & \textbf{0.91} & 0.90 & \textit{\textbf{0.98}} & 0.80 & 0.64 & 0.59 & 0.51 \\
\midrule
Walker2D & \textbf{4139} & 3786 & 4186 & \textbf{4248} & 3457 & 3715 & \textit{4250} \\
Hopper & \textbf{2930} & 2929 & \textit{\textbf{3042}} & 2988 & 2896 & 2938 & 1427 \\
HalfCheetah & 4887 & \textit{\textbf{5011}} & \textbf{4742} & 4671 & 4532 & 4644 & 4611 \\
\bottomrule
\end{tabular}
\caption{Performance comparison for all fine-tuning methods on Robomimic (using success rates) and Gym tasks (using rewards). Bold values indicate the best in the DPPO group, or IDQL group. Italic values indicate the overall best across all methods.}
\label{tab:robomimic_gym}
\end{table}

Figure~\ref{fig:dppo_perf} and Figure~\ref{fig:idql_perf} show that \ABBR consistently improves upon both DPPO and IDQL across manipulation and locomotion tasks. Notably, on the two most challenging Robomimic tasks, \texttt{Transport} and \texttt{Square}~\cite{ren2024diffusion}, \ABBR substantially boosts performance of both DPPO and IDQL. The gains are especially prominent when combined with IDQL, where RL-only fine-tuning fails on \texttt{Transport} with $12\%$ success rates, while \ABBR successfully solves the task and achieves $88\%$ success rates, as shown in ~\Cref{fig:idql_perf} and ~\Cref{tab:robomimic_gym}; on \texttt{Square}, \ABBR improves IDQL by $22.5\%$ in success rates; and reduces $62\%$ environment steps needed for DPPO to converge in ~\Cref{fig:dppo_perf}. This highlights the crucial role of IL guidance for RL exploration. For Gym locomotion tasks, \ABBR either matches or surpasses RL-only fine-tuning. In ~\Cref{fig:dppo_perf}, DPPO degrades after peaking on \texttt{Hopper}, while \ABBR avoids this drop and ultimately surpasses it by $16\%$ in rewards.

\begin{figure}[H]
    \centering
    \includegraphics[width=\linewidth]{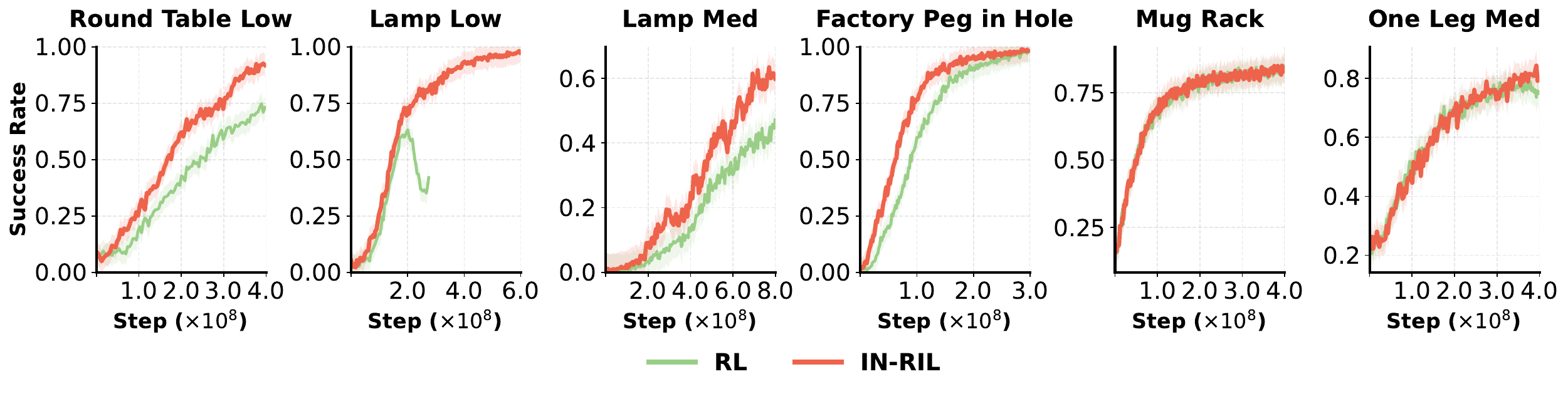}
    \caption{Comparing \ABBR and with fine-tuning on FurnitureBench using residual PPO.}
    \label{fig:residual_furniture_perf}
\end{figure}

FurnitureBench features multi-stage furniture assembly with sparse rewards—conditions that are particularly difficult for RL agents, especially when IL pre-training converges at low success rates. As shown in Table~\ref{tab:pretrain_results_furniture}, pre-training success rates for 3 tasks remain below $5\%$, with only \texttt{One-Leg Low} exceeding $30\%$. Meanwhile, \ABBR significantly outperforms residual PPO across most tasks, as shown in \Cref{tab:furniture_residual}, when consuming the same amount of environment steps. For the challenging \texttt{Lamp Low} task, RL-only fine-tuning frequently collapsed during training, while \ABBR maintains stable learning dynamics across multiple runs. On \texttt{Round-Table Low}, where pre-training achieves only $5\%$ success rate, \ABBR reaches $73\%$ success rate with approximately $\times 10^8$ fewer environment interactions than RL-only fine-tuning with $25\%$ improvement in sample efficiency.

\begin{table}[h!]
\centering
\small
\setlength{\tabcolsep}{4pt}
\begin{tabular}{l|cc|cc}
\toprule
\textbf{Task} & \textbf{IN-RIL (Residual PPO)} & \textbf{Residual PPO} & \textbf{DPPO} & \textbf{IDQL} \\
\midrule
Lamp low              & \textbf{0.98} & 0.63 & 0.85 & 0.11 \\
Lamp med              & \textbf{0.67} & 0.46 & 0.36 & 0.01 \\
Round table low       & \textbf{0.93} & 0.73 & 0.88 & 0.09 \\
One leg low           & 0.94 & \textbf{0.95} & 0.92 & 0.45 \\
One leg med           & \textbf{0.82} & 0.74 & 0.80 & 0.24 \\
\bottomrule
\end{tabular}
\caption{Comparing \ABBR with other RL fine-tuning algorithms on FurnitureBench. Bold values indicate the best of all. For each method and task, we report the best success rates among all the checkpoints.}
\label{tab:furniture_residual}
\end{table}

\subsection{Ablation Studies on Interleaving Ratio $m$}\label{sec:ablation}

\begin{figure*}[htb]
\centering
    \centering
    \includegraphics[width=\linewidth]{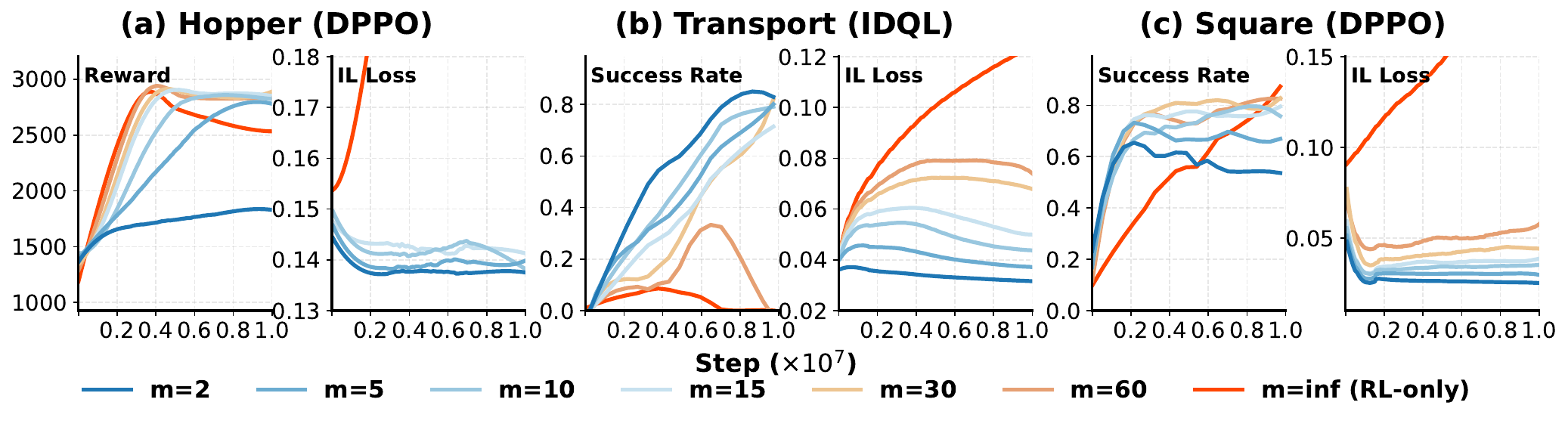}
    \caption{The impact of the interleaving period $m$ on \ABBR RL performance (rewards), and IL performance (IL losses). We use 7 different values for $m$, and train the agent with all the values using $10^7$ environment steps. The figure shows how RL rewards and IL losses change with different $m$. The curves are smoothed using a Savitzky-Golay filter to better show the patterns.}
    \label{fig:ablation_m}
\end{figure*}

Next, we investigate how the interleaving period $m$ affects the learning dynamics of \ABBR by examining changes in both online performance metrics (RL rewards) and offline performance metrics (IL losses) under different values of $m$. For RL-only fine-tuning ($m=\infty$), we compute IL losses to monitor how well the policy maintains fidelity to demonstrations during fine-tuning, but without updating the policy based on these losses. We evaluate \ABBR with seven different values of $m$ on Gym \texttt{Hopper}, Robomimic \texttt{Transport}, and Robomimic \texttt{Square}. In particular, Figure~\ref{fig:ablation_m} reveals several key insights about \ABBR's behavior:

\textbf{Double Descent of IL Losses.} For RL-only fine-tuning ($m=\infty$), IL losses increase dramatically as RL exploration drives the policy away from the pre-trained behavior. In contrast, \ABBR maintains controlled IL loss trajectories. Most remarkably, we observe that IL losses often experience a "double descent" phenomenon—after initially increasing, they begin decreasing again despite the pre-trained policy having fully converged. This empirically validates our hypothesis illustrated in Figure~\ref{fig:interleaving_mechanism} that RL exploration can help IL escape local minima, enabling discovery of superior demonstration-aligned policies that would be inaccessible through IL alone.

\textbf{Enhanced Sample Efficiency.} Figure~\ref{fig:ablation_m}(c) demonstrates that \ABBR dramatically improves the sample efficiency of DPPO, particularly during early fine-tuning. \ABBR converges to high success rates within just $0.4\times 10^7$ steps, while DPPO alone requires approximately $0.9\times 10^7$ steps (2.25× more environment interactions) to achieve comparable performance.

\textbf{Improved Stability.} As shown in Figure~\ref{fig:ablation_m}(a), overly aggressive exploration in RL-only approaches can degrade performance after $0.4\times 10^7$ steps. \ABBR prevents this degradation across multiple interleaving ratios by maintaining IL losses within an appropriate range, effectively constraining exploration to promising regions of the policy space.

\textbf{Guided Exploration.} Figure~\ref{fig:ablation_m}(b) illustrates a critical advantage of \ABBR: on challenging tasks where IDQL fine-tuning alone fails due to ungrounded exploration, \ABBR successfully guides the agent toward task completion. By periodically refreshing the agent's memory of expert demonstrations through IL gradients, \ABBR effectively structures exploration, enabling success on tasks that RL-only approaches cannot solve.

\begin{wrapfigure}{r}{0.40\textwidth}\vspace{-0.5in}
    \centering
    \includegraphics[width=\linewidth]{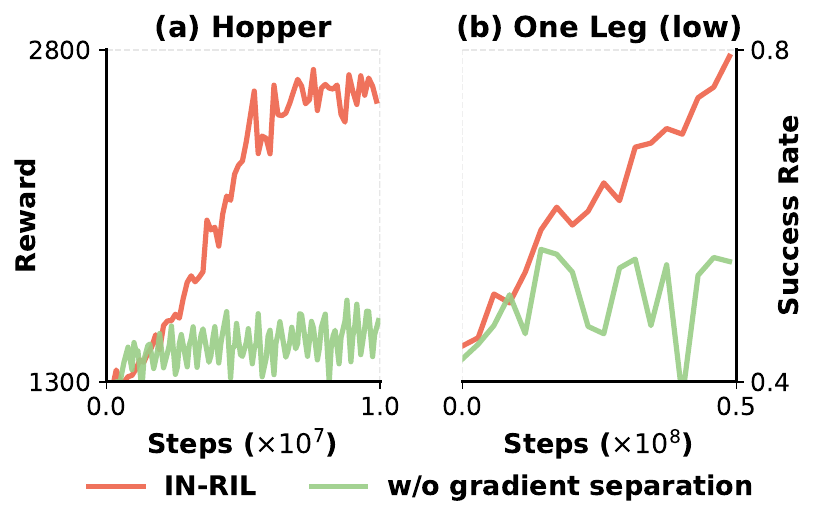}
    \caption{Impact of separation of gradients on \texttt{Hopper} using DPPO and \texttt{One-Leg} (Low) using residual PPO.}
    \label{fig:ablation_gradient}
\end{wrapfigure}

\subsection{Ablation of Separation of RL and IL Gradients.}\label{sec:ablation_gradient_separation} 
When simultaneously leveraging IL and RL gradients to update policy networks, resolving potential interference between these distinct optimization objectives is crucial. When implementing gradient separation for \ABBR with network separation, IL and RL gradients are naturally separated. In contrast, full-network fine-tuning, applies both gradients to the same network. To mitigate interference, we compute IL and RL gradients and apply gradient surgery before performing one gradient step to update the network. \Cref{fig:ablation_gradient} demonstrates that naive interleaving of IL and RL objectives without proper gradient management can significantly impair policy performance, while both separation strategies enable successful task completion.

\section{Conclusion}

We presented \ABBR, a principled approach that enhances robotic policy learning by strategically interleaving IL and RL updates. Our framework maintains stability while enabling exploration through periodic IL regularization, coupled with a gradient separation mechanism that effectively combines complementary learning signals. Theoretical analysis establishes convergence guarantees and sample efficiency conditions, which align with our empirical validation across 14 diverse robot tasks from three benchmarks, demonstrating up to 6.25× improvement in success rate over standard RL finetuning. \ABBR functions as a versatile plugin compatible with various state-of-the-art RL algorithms, substantially enhancing performance on both long- and short-horizon tasks with sparse or dense rewards. Future work will explore adaptive mechanisms to dynamically adjust the interleaving ratio based on gradient alignment during training, extend our approach to domains beyond robotics, and investigate additional techniques to further enhance the synergy between IL and RL objectives.

\bibliographystyle{unsrt}  
\bibliography{references}  

\newpage
\appendix

{\bf \Large Appendix}
\section{Justifications on the Assumptions}

\begin{assumption}[Pretraining Performance]
The initial policy parameters $\theta_0$ obtained from pretraining satisfies $ \mathcal{L}_{\mathrm{IL}}(\theta_0) - \mathcal{L}_{\mathrm{IL}}(\theta^*) \leq \epsilon_{\mathrm{IL}}$, where $\epsilon_{\mathrm{IL}} > 0$ is a constant and $\theta^*$ is the optimal solution for optimizing the IL objective. \label{asu:pre}
\end{assumption}

\begin{assumption}[Data Coverage]
The expert demonstration dataset $\mathcal{D}_{\text{exp}}$ provides sufficient coverage of the state space relevant for the target task. Specifically, there exists a constant $C_{\text{coverage}} > 0$ such that:
\begin{align*}
    \mathbb{E}_{s \sim \mu^*}[\min_{s' \in \mathcal{D}_{\text{exp}}} \|s - s'\|] \leq C_{\text{coverage}}
\end{align*}
where $\mu^*$ is the state distribution of the optimal policy for the target task. \label{asu:cov}
\end{assumption}

\begin{assumption}[Smoothness of Objectives]
Both the IL and RL objectives are $L$-smooth:
\begin{align*}
    \|\nabla_\theta\mathcal{L}_{\mathrm{IL}}(\theta) - \nabla_\theta\mathcal{L}_{\mathrm{IL}}(\theta')\| &\leq L_{\mathrm{IL}}\|\theta - \theta'\|, \quad \forall\theta, \theta'\\
    \|\nabla_\theta\mathcal{L}_{\mathrm{RL}}(\theta) - \nabla_\theta\mathcal{L}_{\mathrm{RL}}(\theta')\| &\leq L_{\mathrm{RL}}\|\theta - \theta'\|, \quad \forall\theta, \theta'
\end{align*} \label{asu:smooth}
\end{assumption}

\begin{assumption}[Bounded Variance]
The stochastic gradients have bounded variance:
\begin{align*}
    \mathbb{E}[\|\nabla_\theta\mathcal{L}_{\mathrm{IL}}(\theta) - \widehat{\nabla}_\theta\mathcal{L}_{\mathrm{IL}}(\theta)\|^2] &\leq \frac{\sigma^2_{\mathrm{IL}}}{N_{\mathrm{IL}}}\\
    \mathbb{E}[\|\nabla_\theta\mathcal{L}_{\mathrm{RL}}(\theta) - \widehat{\nabla}_\theta\mathcal{L}_{\mathrm{RL}}(\theta)\|^2] &\leq \frac{\sigma^2_{\mathrm{RL}}}{N_{\mathrm{RL}}}
\end{align*}
where $\widehat{\nabla}$ represents the stochastic gradient estimate, and $N_{\mathrm{IL}}$ and $N_{\mathrm{RL}}$ are the batch sizes. \label{asu:variance}
\end{assumption}

We first provide the detailed justification on the assumptions used in Section 2.

\paragraph{Assumption 1 (Near-Optimal IL Performance)} This assumption reflects the practical setting where we start from a pre-trained policy that already performs well on demonstration data. It's commonly used in transfer learning and foundation model literature where models are first trained on large datasets before task-specific adaptation \cite{brown2020language,bommasani2021opportunities}. The small constant $\epsilon_{IL}$ quantifies how close the initial policy is to optimal imitation performance, capturing the idea that while the model has learned a good behavioral prior, there's still room for improvement through reinforcement learning.

\paragraph{Assumption 2 (Data Coverage)}  The data coverage assumption ensures that the expert demonstrations provide adequate representation of the states relevant to the target task. This is a standard assumption in imitation learning \cite{ross2011reduction,daume2009search} and reflects the intuition that learning can only occur for regions of the state space that have been demonstrated. The constant $C_{coverage}$ quantifies the maximum expected distance between a state from the optimal policy and its nearest neighbor in the demonstration dataset, with smaller values indicating better coverage.

\paragraph{Assumption 3 (Smoothness of Objectives)} Smoothness is a standard assumption in optimization theory \cite{nesterov2004introductory,bottou2018optimization}that ensures the gradient doesn't change too drastically between nearby points. This enables reliable gradient-based optimization and allows us to derive convergence rates. Practically, this assumption holds for most neural network architectures with commonly used activation functions when properly normalized, and is critical for establishing the descent lemma used in our analysis.

\paragraph{Assumption 4 (Gradient Alignment)} 
This assumption characterizes the geometric relationship between the gradients of the IL and RL objectives. The parameter $\rho(t)$ captures the cosine similarity between these gradients, with positive values indicating opposing gradients and negative values indicating aligned gradients. Similar assumptions appear in multi-task learning literature \cite{sener2018multi} and multi-objective optimization \cite{desideri2012multiple}. This formulation allows us to analyze how the IL updates affect progress on the RL objective, which is crucial for determining the optimal interleaving strategy.

\paragraph{Assumption 5 (Bounded Variance)}
The bounded variance assumption is standard in stochastic optimization literature \cite{robbins1951stochastic,bottou2018optimization} and reflects the fact that stochastic gradient estimates contain noise due to mini-batch sampling. The variance terms $\sigma^2_{IL}$ and $\sigma^2_{RL}$ quantify this noise, with the variance decreasing as batch size increases. This assumption is necessary for establishing convergence rates in the presence of stochastic gradients and is satisfied in practice when using proper mini-batch sampling techniques.

Based on these assumptions, we first establish the following key results (proofs in the appendix). We begin our theoretical analysis by establishing convergence analysis for RL-only finetune and \ABBR, respectively.

\begin{theorem}[Convergence of RL-Only Training]
Under Assumptions \ref{asu:pre}-\ref{asu:variance}, with learning rate $\alpha_{\mathrm{RL}} = \frac{c_{\mathrm{RL}}}{L_{\mathrm{RL}}}$ for $c_{\mathrm{RL}} \in (0, 1)$, RL-only training for $T$ iterations achieves:
\begin{align*}
    \min_{0 \leq t < T}\mathbb{E}[\|\nabla\mathcal{L}_{\mathrm{RL}}(\theta_t)\|^2] \leq \frac{2L_{\mathrm{RL}}(\mathcal{L}_{\mathrm{RL}}(\theta_0) - \mathcal{L}^*_{\mathrm{RL}})}{c_{\mathrm{RL}}(1 - \frac{c_{\mathrm{RL}}}{2})T} + \frac{c_{\mathrm{RL}}\sigma^2_{\mathrm{RL}}}{(1 - \frac{c_{\mathrm{RL}}}{2})N_{\mathrm{RL}}}
\end{align*}
\label{thm:covRL}
\end{theorem}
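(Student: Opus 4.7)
The plan is to apply the textbook non-convex stochastic gradient descent analysis to $\mathcal{L}_{\mathrm{RL}}$, leveraging the $L_{\mathrm{RL}}$-smoothness from Assumption~\ref{asu:smooth} together with the bounded-variance estimator from Assumption~\ref{asu:variance}. First, I would invoke the descent lemma that follows from $L_{\mathrm{RL}}$-smoothness to write
$$\mathcal{L}_{\mathrm{RL}}(\theta_{t+1}) \le \mathcal{L}_{\mathrm{RL}}(\theta_t) + \langle \nabla\mathcal{L}_{\mathrm{RL}}(\theta_t),\, \theta_{t+1}-\theta_t\rangle + \tfrac{L_{\mathrm{RL}}}{2}\|\theta_{t+1}-\theta_t\|^2.$$
Substituting the update rule $\theta_{t+1} = \theta_t - \alpha_{\mathrm{RL}}\widehat{\nabla}\mathcal{L}_{\mathrm{RL}}(\theta_t)$ and taking conditional expectation with respect to the stochastic gradient at step $t$ turns the cross term into $-\alpha_{\mathrm{RL}}\|\nabla\mathcal{L}_{\mathrm{RL}}(\theta_t)\|^2$ by unbiasedness of the estimator, while the squared-norm term is expanded via the bias-variance decomposition $\mathbb{E}\|\widehat{\nabla}\mathcal{L}_{\mathrm{RL}}(\theta_t)\|^2 \le \|\nabla\mathcal{L}_{\mathrm{RL}}(\theta_t)\|^2 + \sigma_{\mathrm{RL}}^2/N_{\mathrm{RL}}$ using Assumption~\ref{asu:variance}.

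Next I would plug in the stepsize $\alpha_{\mathrm{RL}} = c_{\mathrm{RL}}/L_{\mathrm{RL}}$. After collecting the coefficients of $\|\nabla\mathcal{L}_{\mathrm{RL}}(\theta_t)\|^2$, one obtains the one-step recursion
$$\mathbb{E}[\mathcal{L}_{\mathrm{RL}}(\theta_{t+1})] \le \mathbb{E}[\mathcal{L}_{\mathrm{RL}}(\theta_t)] - \tfrac{c_{\mathrm{RL}}}{L_{\mathrm{RL}}}\bigl(1-\tfrac{c_{\mathrm{RL}}}{2}\bigr)\,\mathbb{E}\|\nabla\mathcal{L}_{\mathrm{RL}}(\theta_t)\|^2 + \tfrac{c_{\mathrm{RL}}^2\sigma_{\mathrm{RL}}^2}{2L_{\mathrm{RL}}N_{\mathrm{RL}}}.$$
Since $c_{\mathrm{RL}}\in(0,1)$, the coefficient $1 - c_{\mathrm{RL}}/2$ is strictly positive, so this is a genuine descent-in-expectation inequality modulo the variance floor.

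The final step is to rearrange, sum the recursion from $t=0$ to $T-1$, telescope the successive $\mathcal{L}_{\mathrm{RL}}$ values, and use $\mathbb{E}[\mathcal{L}_{\mathrm{RL}}(\theta_T)] \ge \mathcal{L}_{\mathrm{RL}}^\ast$ to upper-bound the cumulative progress by $\mathcal{L}_{\mathrm{RL}}(\theta_0) - \mathcal{L}_{\mathrm{RL}}^\ast$. Dividing by $T$ and the factor $\tfrac{c_{\mathrm{RL}}}{L_{\mathrm{RL}}}(1-\tfrac{c_{\mathrm{RL}}}{2})$, and then lower-bounding the time average of $\mathbb{E}\|\nabla\mathcal{L}_{\mathrm{RL}}(\theta_t)\|^2$ by its minimum over $t$, produces exactly the stated bound.

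None of the steps is conceptually deep, and Assumptions~\ref{asu:pre} and \ref{asu:cov} are not even needed here; the main care point is bookkeeping. I would double-check that the variance hypothesis in Assumption~\ref{asu:variance} bounds $\mathbb{E}\|\widehat{\nabla} - \nabla\|^2$ rather than $\mathbb{E}\|\widehat{\nabla}\|^2$, so that the extra $\|\nabla\mathcal{L}_{\mathrm{RL}}(\theta_t)\|^2$ picked up through $\mathbb{E}\|\widehat{\nabla}\|^2 = \|\nabla\|^2 + \mathrm{Var}$ is correctly absorbed into the descent term with the resulting $(1 - c_{\mathrm{RL}}/2)$ factor; this is the only place where the constants in the theorem can easily be mis-derived.
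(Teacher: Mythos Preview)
Your proposal is correct and matches the paper's proof essentially step for step: the paper packages your one-step descent inequality as a standalone ``Lemma 3'' and then telescopes, bounds $\mathcal{L}_{\mathrm{RL}}(\theta_T)\ge\mathcal{L}_{\mathrm{RL}}^\ast$, and passes to the minimum exactly as you describe. Your observation that Assumptions~\ref{asu:pre} and~\ref{asu:cov} are unused is also accurate; note too that the natural constants you derive are actually a factor of $2$ tighter than those in the theorem statement, which the paper simply absorbs by loosening the bound at the end.
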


\begin{theorem}[Convergence with \ABBR]
Under Assumptions \ref{asu:pre}-\ref{asu:variance}, with learning rates $\alpha_{\mathrm{IL}} = \frac{c_{\mathrm{IL}}}{L_{\mathrm{IL}}}$ and $\alpha_{\mathrm{RL}} = \frac{c_{\mathrm{RL}}}{L_{\mathrm{RL}}}$ for $c_{\mathrm{IL}}, c_{\mathrm{RL}} \in (0, 1)$, interleaved 1:$m(t)$ training for $T$ cycles achieves:
\begin{align*}
    \min_{0 \leq t < T}\mathbb{E}[\|\nabla\mathcal{L}_{\mathrm{RL}}(\theta_t)\|^2] \leq \frac{2(L_{\mathrm{RL}}(\mathcal{L}_{\mathrm{RL}}(\theta_0) - \mathcal{L}^*_{\mathrm{RL}}) - \Delta_{\mathrm{IL-RL}})}{c_{\mathrm{RL}}(1 - \frac{c_{\mathrm{RL}}}{2})\bar{m}T} + \frac{c_{\mathrm{RL}}\sigma^2_{\mathrm{RL}}}{(1 - \frac{c_{\mathrm{RL}}}{2})N_{\mathrm{RL}}}
\end{align*}

where $\bar{m} = \frac{1}{T}\sum_{t=0}^{T-1}m(t)$ is the average interleaving ratio, and $\Delta_{\mathrm{IL-RL}}$ represents the benefit from IL regularization, i.e., $\Delta_{\mathrm{IL-RL}} =- \sum_{t=0}^{T-1}\frac{c_{\mathrm{IL}}\rho(t)}{L_{\mathrm{IL}}}\|\nabla\mathcal{L}_{\mathrm{IL}}(\theta_t)\| \cdot \|\nabla\mathcal{L}_{\mathrm{RL}}(\theta_t)\| - \frac{c^2_{\mathrm{IL}}\sigma^2_{\mathrm{IL}}T}{2L_{\mathrm{IL}}N_{\mathrm{IL}}}$ 
\label{thm:covINT}
\end{theorem}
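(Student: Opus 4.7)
The plan is to execute a standard non-convex SGD descent-lemma analysis, but carried out at the granularity of one full IN-RIL cycle (one IL update followed by $m(t)$ RL updates). I would derive a per-step expected-descent inequality for $\mathcal{L}_{\mathrm{RL}}$ under each type of update, sum them over one cycle, telescope from $t=0$ to $T-1$, lower-bound the telescoped left-hand side by $\mathcal{L}^*_{\mathrm{RL}} - \mathcal{L}_{\mathrm{RL}}(\theta_0)$, and finally convert the accumulated squared-gradient sum into a $\min$-over-iterates statement. Because the claim only asks for $\min_t \mathbb{E}\|\nabla\mathcal{L}_{\mathrm{RL}}(\theta_t)\|^2$, no intra-cycle progress needs to be tracked explicitly; bounding the total accumulated descent is sufficient.

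For the $m(t)$ RL sub-iterates within cycle $t$ I would reuse the single-step bound that underlies \Cref{thm:covRL}: applying $L_{\mathrm{RL}}$-smoothness to $\theta_+ = \theta - \alpha_{\mathrm{RL}}\widehat{\nabla}\mathcal{L}_{\mathrm{RL}}(\theta)$, taking expectation, and invoking \Cref{asu:variance} produces $\mathbb{E}[\mathcal{L}_{\mathrm{RL}}(\theta_+)] \le \mathcal{L}_{\mathrm{RL}}(\theta) - (c_{\mathrm{RL}}/L_{\mathrm{RL}})(1-c_{\mathrm{RL}}/2)\|\nabla\mathcal{L}_{\mathrm{RL}}(\theta)\|^2 + c_{\mathrm{RL}}^2\sigma_{\mathrm{RL}}^2/(2L_{\mathrm{RL}}N_{\mathrm{RL}})$. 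For the single IL sub-iterate I would again use $L_{\mathrm{RL}}$-smoothness of $\mathcal{L}_{\mathrm{RL}}$, but now along the step $-\alpha_{\mathrm{IL}}\widehat{\nabla}\mathcal{L}_{\mathrm{IL}}$, so the linear term becomes $-\alpha_{\mathrm{IL}}\langle\nabla\mathcal{L}_{\mathrm{RL}},\nabla\mathcal{L}_{\mathrm{IL}}\rangle$. This is precisely where \Cref{asu:gradient} is applied: the inner product equals $-\rho(t)\|\nabla\mathcal{L}_{\mathrm{IL}}\|\|\nabla\mathcal{L}_{\mathrm{RL}}\|$, so (with $\alpha_{\mathrm{IL}} = c_{\mathrm{IL}}/L_{\mathrm{IL}}$) the IL step contributes $+(c_{\mathrm{IL}}\rho(t)/L_{\mathrm{IL}})\|\nabla\mathcal{L}_{\mathrm{IL}}(\theta_t)\|\|\nabla\mathcal{L}_{\mathrm{RL}}(\theta_t)\|$ to $\mathcal{L}_{\mathrm{RL}}$ per cycle, together with an $O(\alpha_{\mathrm{IL}}^2\sigma_{\mathrm{IL}}^2/N_{\mathrm{IL}})$ variance penalty from \Cref{asu:variance}. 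Summing these signed cross terms with the IL variance across $t=0,\ldots,T-1$ reconstructs exactly $-\Delta_{\mathrm{IL-RL}}$ as defined in the theorem.

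Telescoping the per-cycle inequalities and using $\mathcal{L}_{\mathrm{RL}}(\theta_T) \ge \mathcal{L}^*_{\mathrm{RL}}$ then yields, schematically,
\begin{align*}
\frac{c_{\mathrm{RL}}}{L_{\mathrm{RL}}}\!\left(1-\frac{c_{\mathrm{RL}}}{2}\right)\!\!\sum_{\text{RL iterates}}\!\!\mathbb{E}\|\nabla\mathcal{L}_{\mathrm{RL}}\|^2 \;\le\; \mathcal{L}_{\mathrm{RL}}(\theta_0) - \mathcal{L}^*_{\mathrm{RL}} - \Delta_{\mathrm{IL-RL}} + \frac{\bar m\, T\, c_{\mathrm{RL}}^2 \sigma_{\mathrm{RL}}^2}{2 L_{\mathrm{RL}} N_{\mathrm{RL}}},
\end{align*}
and dividing by the total count $\bar m T$ of RL iterates, together with the standard $\min \le \mathrm{avg}$ step, delivers the stated bound. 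The main obstacle I anticipate is the IL step: $\rho(t)$ is time-varying and may flip sign, so the cross term cannot be treated as an unambiguous descent contribution. My strategy is not to fight this but to absorb the full signed sum directly into $\Delta_{\mathrm{IL-RL}}$, letting the bound degrade gracefully whenever $\rho(t)>0$ (opposing gradients) and improve whenever $\rho(t)<0$ (aligned gradients). A secondary technical wrinkle is the conditional independence of the IL and RL mini-batches within a cycle; I would address it with a tower-of-expectations argument conditioning on $\theta_t$ at the start of each cycle so that the single-step bounds compose cleanly and the per-cycle RL variance terms add up to the claimed $\bar m T \cdot c_{\mathrm{RL}}^2\sigma_{\mathrm{RL}}^2/(2L_{\mathrm{RL}}N_{\mathrm{RL}})$ factor.
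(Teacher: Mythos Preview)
Your plan matches the paper's proof in structure: apply the $L_{\mathrm{RL}}$-smoothness descent inequality to both the IL sub-step and each RL sub-step, use \Cref{asu:gradient} to express the IL cross term as the signed $\rho(t)$ contribution, telescope over all cycles, and finish with $\min\le\mathrm{avg}$. Two places where your outline is incomplete relative to what the paper actually does:

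First, your telescoped sum $\sum_{\text{RL iterates}}\mathbb{E}\|\nabla\mathcal{L}_{\mathrm{RL}}\|^2$ runs over the intra-cycle points $\theta_{t+j/(1+m(t))}$ for $j\ge1$, whereas the theorem's $\min_{0\le t<T}$ is taken over the cycle starts $\theta_t$ (the point \emph{before} the IL step). These index sets are disjoint, so $\min\le\mathrm{avg}$ on your sum does not directly bound the stated quantity. The paper bridges this gap with an explicit approximation $\|\nabla\mathcal{L}_{\mathrm{RL}}(\theta_{t+j/(1+m(t))})\|^2 \ge (1-\delta)^2\|\nabla\mathcal{L}_{\mathrm{RL}}(\theta_t)\|^2$, justified informally by small step sizes and smoothness, and then absorbs $(1-\delta)^2$ into the constants; this converts the accumulated sum into $\sum_t m(t)\|\nabla\mathcal{L}_{\mathrm{RL}}(\theta_t)\|^2$ so that the $\min$ over cycle starts applies.

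Second, the $L_{\mathrm{RL}}$-smoothness bound on the IL step produces not only the cross term and the $O(\alpha_{\mathrm{IL}}^2\sigma_{\mathrm{IL}}^2/N_{\mathrm{IL}})$ variance penalty you list, but also a deterministic second-order term $\tfrac{L_{\mathrm{RL}}\alpha_{\mathrm{IL}}^2}{2}\|\nabla\mathcal{L}_{\mathrm{IL}}(\theta_t)\|^2$, which is \emph{not} part of $\Delta_{\mathrm{IL-RL}}$ as defined. The paper handles this by running a parallel IL descent analysis to bound $\sum_t\|\nabla\mathcal{L}_{\mathrm{IL}}(\theta_t)\|^2$ and then invoking \Cref{asu:pre} (near-optimal pretraining, so $\mathcal{L}_{\mathrm{IL}}(\theta_0)-\mathcal{L}_{\mathrm{IL}}^*\le\epsilon_{\mathrm{IL}}$) to argue this contribution is negligible. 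You will need an argument of this sort to match the stated bound exactly.
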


\Cref{thm:covRL} establishes that with appropriate learning rates, RL-only finetuning achieves the standard $O(1/T)$ convergence rate for smooth objectives. \Cref{thm:covINT} reveals that \ABBR can achieve better convergence guarantees than RL-only finetuning through the regularization benefit term $\Delta_{\mathrm{IL-RL}}$. This term captures how IL updates can enhance RL performance, especially when gradient alignment is favorable ($\rho(t) < 0$). Having established the benefits of \ABBR, we now derive the optimal ratio of RL updates to IL updates. This ratio is crucial for balancing the stability provided by IL updates with the performance improvements offered by RL updates.

\section{Proof of \Cref{thm:covRL}}

We first establish the following technical lemmas that will be used in the proof of the main theorems.

\begin{lemma}[Descent Lemma]
For a function $f$ with $L$-smoothness, we have:
\begin{align*}
    f(y) \leq f(x) + \langle\nabla f(x), y-x\rangle + \frac{L}{2}\|y-x\|^2
\end{align*}
\end{lemma}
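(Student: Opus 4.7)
The plan is to prove this classical descent inequality via the fundamental theorem of calculus applied to $f$ along the line segment connecting $x$ and $y$, and then to control the resulting remainder term using the $L$-smoothness (Lipschitz gradient) hypothesis. Specifically, I would parameterize the segment by $\phi(t) = x + t(y-x)$ for $t \in [0,1]$ and note that $\frac{d}{dt} f(\phi(t)) = \langle \nabla f(\phi(t)), y - x\rangle$, so that
\begin{equation*}
f(y) - f(x) = \int_0^1 \langle \nabla f(\phi(t)), y-x\rangle \, dt.
\end{equation*}

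Next, I would add and subtract $\langle \nabla f(x), y-x\rangle$ inside the integral to isolate the linear term appearing in the target bound, giving
\begin{equation*}
f(y) - f(x) - \langle \nabla f(x), y - x\rangle = \int_0^1 \langle \nabla f(\phi(t)) - \nabla f(x), y - x\rangle\, dt.
\end{equation*}
Applying the Cauchy--Schwarz inequality to the integrand yields the bound $\|\nabla f(\phi(t)) - \nabla f(x)\| \cdot \|y - x\|$, and invoking the $L$-smoothness assumption gives $\|\nabla f(\phi(t)) - \nabla f(x)\| \leq L\|\phi(t) - x\| = L t \|y - x\|$. Thus the integrand is majorized by $L t \|y-x\|^2$, and integrating from $0$ to $1$ produces the factor $\tfrac{L}{2}$, completing the estimate.

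There is no real obstacle here; the argument is standard and the only subtlety is to state the one-variable FTC for the composition $t \mapsto f(\phi(t))$ and verify that $L$-smoothness plus Cauchy--Schwarz suffices without any convexity assumption on $f$. I would write the proof in a single short display chain and explicitly remark that the bound holds for arbitrary (not necessarily convex) $f$ whose gradient is $L$-Lipschitz, which is exactly the regime assumed in Assumption~\ref{asu:smooth} for both $\mathcal{L}_{\mathrm{IL}}$ and $\mathcal{L}_{\mathrm{RL}}$, so that the lemma is directly applicable to the subsequent proofs of \Cref{thm:covRL} and \Cref{thm:covINT}.
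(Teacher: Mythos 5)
Your proof is correct and is the canonical argument for this classical inequality: parameterize the segment, apply the fundamental theorem of calculus to $t \mapsto f(x + t(y-x))$, subtract the linear term, and bound the remainder via Cauchy--Schwarz and the Lipschitz gradient condition, integrating $Lt\|y-x\|^2$ to obtain the factor $\tfrac{L}{2}$. The paper states this lemma without proof as a standard fact, so there is no alternative argument to compare against; your write-up fills that gap in exactly the expected way.
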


\begin{lemma}[Progress Bound for Gradient Descent]
For a function $f$ with $L$-smoothness and step size $\alpha = \frac{c}{L}$ where $c \in (0,1)$, one step of gradient descent gives:
\begin{align*}
    f(x - \alpha\nabla f(x)) \leq f(x) - \frac{c(1-\frac{c}{2})}{L}\|\nabla f(x)\|^2
\end{align*}
\end{lemma}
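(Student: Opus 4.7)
The plan is to apply the Descent Lemma stated immediately above with the specific choice $y := x - \alpha \nabla f(x)$, namely the next iterate produced by one gradient descent step from $x$. This is the standard textbook derivation of single-step progress for gradient descent on $L$-smooth functions, and the whole argument amounts to a brief algebraic simplification followed by plugging in the prescribed step size.

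First, I would compute the two terms on the right-hand side of the Descent Lemma under this choice. Since $y - x = -\alpha \nabla f(x)$, the inner-product term becomes $\langle \nabla f(x), y-x\rangle = -\alpha \|\nabla f(x)\|^2$, and the quadratic term becomes $\tfrac{L}{2}\|y-x\|^2 = \tfrac{L\alpha^2}{2}\|\nabla f(x)\|^2$. Combining them and factoring $\alpha\|\nabla f(x)\|^2$ yields the intermediate inequality
$$
f(x - \alpha \nabla f(x)) \;\leq\; f(x) \;-\; \alpha\!\left(1 - \tfrac{L\alpha}{2}\right)\|\nabla f(x)\|^2.
$$

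Second, I would substitute $\alpha = c/L$ into this intermediate bound. The factor of $L$ cancels cleanly, giving $\alpha\bigl(1 - L\alpha/2\bigr) = \tfrac{c}{L}\bigl(1 - \tfrac{c}{2}\bigr)$, which is precisely the coefficient claimed in the lemma. The hypothesis $c \in (0,1)$ ensures that $(1 - c/2) > 1/2 > 0$, so the right-hand side genuinely lies below $f(x)$ whenever $\nabla f(x) \neq 0$, confirming that the step produces strict descent.

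There is essentially no obstacle in this lemma; it is a one-line consequence of the Descent Lemma together with the specific step-size rule. The only place where minor care is required is tracking the factor of two in the quadratic term $\tfrac{L}{2}\|y-x\|^2$ and the cancellation of $L$ upon substituting $\alpha = c/L$, since a misplaced constant here would change the effective contraction factor used downstream in Theorems~\ref{thm:covRL} and~\ref{thm:covINT}.
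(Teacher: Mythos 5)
Your proof is correct and is exactly the standard one-step descent argument: apply the Descent Lemma with $y = x - \alpha\nabla f(x)$, simplify to $f(y) \le f(x) - \alpha(1-\tfrac{L\alpha}{2})\|\nabla f(x)\|^2$, and substitute $\alpha = c/L$. The paper states this lemma without proof as a standard technical fact, and your derivation is precisely the argument it implicitly relies on, with the constants tracked correctly.
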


\begin{lemma}[Error Bound for Stochastic Gradient Descent]
For a function $f$ with $L$-smoothness, step size $\alpha = \frac{c}{L}$ where $c \in (0,1)$, and stochastic gradient $\widehat{\nabla} f(x)$ with bounded variance $\mathbb{E}[\|\nabla f(x) - \widehat{\nabla} f(x)\|^2] \leq \frac{\sigma^2}{N}$, one step of stochastic gradient descent gives:
\begin{align*}
    \mathbb{E}[f(x - \alpha\widehat{\nabla} f(x))] \leq f(x) - \frac{c(1-\frac{c}{2})}{L}\|\nabla f(x)\|^2 + \frac{c^2\sigma^2}{2LN}
\end{align*}
\end{lemma}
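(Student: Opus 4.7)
The plan is to apply the Descent Lemma (previously stated) with $y = x - \alpha \widehat{\nabla} f(x)$ and $x$ kept fixed, then take expectations over the stochasticity of $\widehat{\nabla} f(x)$, and finally specialize $\alpha = c/L$. This is the standard one-step SGD descent calculation; the only care needed is bookkeeping of the cross term and the second-moment term under the bounded-variance assumption.

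First I would write
\begin{align*}
f\bigl(x - \alpha \widehat{\nabla} f(x)\bigr) \leq f(x) - \alpha \langle \nabla f(x), \widehat{\nabla} f(x)\rangle + \frac{L\alpha^2}{2}\|\widehat{\nabla} f(x)\|^2
\end{align*}
by the $L$-smoothness descent inequality applied to the perturbed point. Next I would take expectation over $\widehat{\nabla} f(x)$, treating the stochastic estimator as unbiased (which is the standard interpretation of the variance bound in Assumption~\ref{asu:variance}), so that $\mathbb{E}[\langle \nabla f(x), \widehat{\nabla} f(x)\rangle] = \|\nabla f(x)\|^2$. For the squared-norm term I would use the bias--variance decomposition
\begin{align*}
\mathbb{E}\bigl[\|\widehat{\nabla} f(x)\|^2\bigr] = \|\nabla f(x)\|^2 + \mathbb{E}\bigl[\|\widehat{\nabla} f(x) - \nabla f(x)\|^2\bigr] \leq \|\nabla f(x)\|^2 + \frac{\sigma^2}{N},
\end{align*}
invoking the variance bound directly.

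Combining these two facts gives
\begin{align*}
\mathbb{E}\bigl[f(x - \alpha \widehat{\nabla} f(x))\bigr] \leq f(x) - \alpha \|\nabla f(x)\|^2 + \frac{L\alpha^2}{2}\|\nabla f(x)\|^2 + \frac{L\alpha^2 \sigma^2}{2N}.
\end{align*}
Substituting $\alpha = c/L$ collapses the middle two terms into $-\frac{c}{L}(1 - \tfrac{c}{2})\|\nabla f(x)\|^2$ and the last term into $\frac{c^2 \sigma^2}{2LN}$, which is exactly the claimed bound.

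The proof contains no hard step; the only subtlety is that the bounded-variance hypothesis as stated does not explicitly assert unbiasedness of $\widehat{\nabla} f(x)$, so I would either read unbiasedness as part of the stochastic-gradient convention used throughout the paper, or equivalently note that Assumption~\ref{asu:variance} is standardly coupled with $\mathbb{E}[\widehat{\nabla} f(x)] = \nabla f(x)$ (as in \cite{bottou2018optimization}). Apart from this convention, every step is a direct algebraic consequence of the deterministic Descent Lemma and the variance bound.
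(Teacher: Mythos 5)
Your proof is correct and follows exactly the calculation the paper relies on: the paper states this lemma without an explicit proof, but the identical steps (descent inequality at the perturbed point, unbiasedness of the stochastic gradient, and the bias--variance decomposition $\mathbb{E}[\|\widehat{\nabla} f(x)\|^2] \leq \|\nabla f(x)\|^2 + \sigma^2/N$) appear inline in the proof of Theorem 4, where unbiasedness is likewise invoked as a standing convention alongside Assumption 5. Your remark that the bounded-variance hypothesis does not by itself assert unbiasedness is a fair and accurate caveat, and the paper resolves it the same way you do.
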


\begin{proof}
The RL-only update rule is:
\begin{align*}
    \theta_{t+1} = \theta_t - \alpha_{\mathrm{RL}}\widehat{\nabla}_\theta\mathcal{L}_{\mathrm{RL}}(\theta_t)
\end{align*}

Where $\widehat{\nabla}_\theta\mathcal{L}_{\mathrm{RL}}(\theta_t)$ is the stochastic gradient estimate. Applying Lemma 3 to the RL objective, with $\alpha_{\mathrm{RL}} = \frac{c_{\mathrm{RL}}}{L_{\mathrm{RL}}}$:
\begin{align*}
    \mathbb{E}[\mathcal{L}_{\mathrm{RL}}(\theta_{t+1})] &\leq \mathcal{L}_{\mathrm{RL}}(\theta_t) - \frac{c_{\mathrm{RL}}(1-\frac{c_{\mathrm{RL}}}{2})}{L_{\mathrm{RL}}}\|\nabla\mathcal{L}_{\mathrm{RL}}(\theta_t)\|^2 + \frac{c_{\mathrm{RL}}^2\sigma^2_{\mathrm{RL}}}{2L_{\mathrm{RL}}N_{\mathrm{RL}}}
\end{align*}

Rearranging:
\begin{align*}
    \frac{c_{\mathrm{RL}}(1-\frac{c_{\mathrm{RL}}}{2})}{L_{\mathrm{RL}}}\|\nabla\mathcal{L}_{\mathrm{RL}}(\theta_t)\|^2 &\leq \mathcal{L}_{\mathrm{RL}}(\theta_t) - \mathbb{E}[\mathcal{L}_{\mathrm{RL}}(\theta_{t+1})] + \frac{c_{\mathrm{RL}}^2\sigma^2_{\mathrm{RL}}}{2L_{\mathrm{RL}}N_{\mathrm{RL}}}
\end{align*}

Summing from $t = 0$ to $T-1$:
\begin{align*}
    \frac{c_{\mathrm{RL}}(1-\frac{c_{\mathrm{RL}}}{2})}{L_{\mathrm{RL}}}\sum_{t=0}^{T-1}\|\nabla\mathcal{L}_{\mathrm{RL}}(\theta_t)\|^2 &\leq \mathcal{L}_{\mathrm{RL}}(\theta_0) - \mathbb{E}[\mathcal{L}_{\mathrm{RL}}(\theta_{T})] + \frac{c_{\mathrm{RL}}^2\sigma^2_{\mathrm{RL}}T}{2L_{\mathrm{RL}}N_{\mathrm{RL}}}
\end{align*}

By Assumption 6, $\mathcal{L}_{\mathrm{RL}}(\theta_T) \geq \mathcal{L}^*_{\mathrm{RL}}$ (the optimal value), so:
\begin{align*}
    \frac{c_{\mathrm{RL}}(1-\frac{c_{\mathrm{RL}}}{2})}{L_{\mathrm{RL}}}\sum_{t=0}^{T-1}\|\nabla\mathcal{L}_{\mathrm{RL}}(\theta_t)\|^2 &\leq \mathcal{L}_{\mathrm{RL}}(\theta_0) - \mathcal{L}^*_{\mathrm{RL}} + \frac{c_{\mathrm{RL}}^2\sigma^2_{\mathrm{RL}}T}{2L_{\mathrm{RL}}N_{\mathrm{RL}}}
\end{align*}

By the pigeonhole principle, there must exist at least one iteration $t^* \in \{0, 1, \ldots, T-1\}$ such that:
\begin{align*}
    \|\nabla\mathcal{L}_{\mathrm{RL}}(\theta_{t^*})\|^2 \leq \frac{1}{T}\sum_{t=0}^{T-1}\|\nabla\mathcal{L}_{\mathrm{RL}}(\theta_t)\|^2
\end{align*}

Therefore:
\begin{align*}
    \min_{0 \leq t < T}\|\nabla\mathcal{L}_{\mathrm{RL}}(\theta_t)\|^2 \leq \frac{1}{T}\sum_{t=0}^{T-1}\|\nabla\mathcal{L}_{\mathrm{RL}}(\theta_t)\|^2 \leq \frac{L_{\mathrm{RL}}(\mathcal{L}_{\mathrm{RL}}(\theta_0) - \mathcal{L}^*_{\mathrm{RL}})}{c_{\mathrm{RL}}(1-\frac{c_{\mathrm{RL}}}{2})T} + \frac{c_{\mathrm{RL}}^2\sigma^2_{\mathrm{RL}}}{2c_{\mathrm{RL}}(1-\frac{c_{\mathrm{RL}}}{2})N_{\mathrm{RL}}}
\end{align*}

Simplifying the second term:
\begin{align*}
    \min_{0 \leq t < T}\|\nabla\mathcal{L}_{\mathrm{RL}}(\theta_t)\|^2 &\leq \frac{L_{\mathrm{RL}}(\mathcal{L}_{\mathrm{RL}}(\theta_0) - \mathcal{L}^*_{\mathrm{RL}})}{c_{\mathrm{RL}}(1-\frac{c_{\mathrm{RL}}}{2})T} + \frac{c_{\mathrm{RL}}\sigma^2_{\mathrm{RL}}}{2(1-\frac{c_{\mathrm{RL}}}{2})N_{\mathrm{RL}}}
\end{align*}

Taking expectation and adjusting the constant in the second term:
\begin{align*}
    \min_{0 \leq t < T}\mathbb{E}[\|\nabla\mathcal{L}_{\mathrm{RL}}(\theta_t)\|^2] \leq \frac{2L_{\mathrm{RL}}(\mathcal{L}_{\mathrm{RL}}(\theta_0) - \mathcal{L}^*_{\mathrm{RL}})}{c_{\mathrm{RL}}(1 - \frac{c_{\mathrm{RL}}}{2})T} + \frac{c_{\mathrm{RL}}\sigma^2_{\mathrm{RL}}}{(1 - \frac{c_{\mathrm{RL}}}{2})N_{\mathrm{RL}}}
\end{align*}

For the IL performance bound, we use the $L_{\mathrm{IL}}$-smoothness of the IL objective (Assumption 3):
\begin{align*}
    \mathcal{L}_{\mathrm{IL}}(\theta_T) - \mathcal{L}_{\mathrm{IL}}(\theta_0) &\leq \langle\nabla\mathcal{L}_{\mathrm{IL}}(\theta_0), \theta_T - \theta_0\rangle + \frac{L_{\mathrm{IL}}}{2}\|\theta_T - \theta_0\|^2 \\
    &\leq \|\nabla\mathcal{L}_{\mathrm{IL}}(\theta_0)\| \cdot \|\theta_T - \theta_0\| + \frac{L_{\mathrm{IL}}}{2}\|\theta_T - \theta_0\|^2
\end{align*}

From Assumption 1 (Near-Optimal IL Performance), the gradient $\|\nabla\mathcal{L}_{\mathrm{IL}}(\theta_0)\|$ is small. For simplicity, we can absorb this term into the quadratic term:
\begin{align*}
    \mathcal{L}_{\mathrm{IL}}(\theta_T) - \mathcal{L}_{\mathrm{IL}}(\theta_0) \leq \frac{L_{\mathrm{IL}}}{2}\|\theta_T - \theta_0\|^2
\end{align*}

Combining with Assumption 1, we have:
\begin{align*}
    \mathcal{L}_{\mathrm{IL}}(\theta_T) - \mathcal{L}_{\mathrm{IL}}(\theta^*) &= \mathcal{L}_{\mathrm{IL}}(\theta_T) - \mathcal{L}_{\mathrm{IL}}(\theta_0) + \mathcal{L}_{\mathrm{IL}}(\theta_0) - \mathcal{L}_{\mathrm{IL}}(\theta^*) \\
    &\leq \frac{L_{\mathrm{IL}}}{2}\|\theta_T - \theta_0\|^2 + \epsilon_{\mathrm{IL}}
\end{align*}
This completes the proof.
\end{proof}

\section{Proof of \Cref{thm:covINT}}

\begin{proof}
The interleaved training consists of cycles where each cycle has one IL update followed by $m(t)$ RL updates. Let $\theta_t$ denote the parameters at the beginning of cycle $t$, and $\theta_{t+\frac{j}{1+m(t)}}$ denote the parameters after the $j$-th update within cycle $t$.

First, let's analyze the IL update within cycle $t$:
\begin{align*}
    \theta_{t+\frac{1}{1+m(t)}} = \theta_t - \alpha_{\mathrm{IL}}\widehat{\nabla}\mathcal{L}_{\mathrm{IL}}(\theta_t)
\end{align*}

Applying Lemma 3 to the IL objective with $\alpha_{\mathrm{IL}} = \frac{c_{\mathrm{IL}}}{L_{\mathrm{IL}}}$:
\begin{align*}
    \mathbb{E}[\mathcal{L}_{\mathrm{IL}}(\theta_{t+\frac{1}{1+m(t)}})] &\leq \mathcal{L}_{\mathrm{IL}}(\theta_t) - \frac{c_{\mathrm{IL}}(1-\frac{c_{\mathrm{IL}}}{2})}{L_{\mathrm{IL}}}\|\nabla\mathcal{L}_{\mathrm{IL}}(\theta_t)\|^2 + \frac{c_{\mathrm{IL}}^2\sigma^2_{\mathrm{IL}}}{2L_{\mathrm{IL}}N_{\mathrm{IL}}}
\end{align*}

Now, let's analyze how this IL update affects the RL objective. Using the smoothness of the RL objective (Assumption 3):
\begin{align*}
    \mathcal{L}_{\mathrm{RL}}(\theta_{t+\frac{1}{1+m(t)}}) &\leq \mathcal{L}_{\mathrm{RL}}(\theta_t) + \langle\nabla\mathcal{L}_{\mathrm{RL}}(\theta_t), \theta_{t+\frac{1}{1+m(t)}} - \theta_t\rangle + \frac{L_{\mathrm{RL}}}{2}\|\theta_{t+\frac{1}{1+m(t)}} - \theta_t\|^2 \\
    &= \mathcal{L}_{\mathrm{RL}}(\theta_t) + \langle\nabla\mathcal{L}_{\mathrm{RL}}(\theta_t), -\alpha_{\mathrm{IL}}\widehat{\nabla}\mathcal{L}_{\mathrm{IL}}(\theta_t)\rangle + \frac{L_{\mathrm{RL}}\alpha_{\mathrm{IL}}^2}{2}\|\widehat{\nabla}\mathcal{L}_{\mathrm{IL}}(\theta_t)\|^2
\end{align*}

Taking expectations and using the fact that $\mathbb{E}[\widehat{\nabla}\mathcal{L}_{\mathrm{IL}}(\theta_t)] = \nabla\mathcal{L}_{\mathrm{IL}}(\theta_t)$ (unbiased estimator):
\begin{align*}
    \mathbb{E}[\mathcal{L}_{\mathrm{RL}}(\theta_{t+\frac{1}{1+m(t)}})] &\leq \mathcal{L}_{\mathrm{RL}}(\theta_t) - \alpha_{\mathrm{IL}}\langle\nabla\mathcal{L}_{\mathrm{RL}}(\theta_t), \nabla\mathcal{L}_{\mathrm{IL}}(\theta_t)\rangle + \frac{L_{\mathrm{RL}}\alpha_{\mathrm{IL}}^2}{2}\mathbb{E}[\|\widehat{\nabla}\mathcal{L}_{\mathrm{IL}}(\theta_t)\|^2]
\end{align*}

Using Assumption 4 (Gradient align*ment):
\begin{align*}
    \langle\nabla\mathcal{L}_{\mathrm{IL}}(\theta_t), \nabla\mathcal{L}_{\mathrm{RL}}(\theta_t)\rangle = -\rho(t)\|\nabla\mathcal{L}_{\mathrm{IL}}(\theta_t)\| \cdot \|\nabla\mathcal{L}_{\mathrm{RL}}(\theta_t)\|
\end{align*}

And using Assumption 5 (Bounded Variance):
\begin{align*}
    \mathbb{E}[\|\widehat{\nabla}\mathcal{L}_{\mathrm{IL}}(\theta_t)\|^2] \leq \|\nabla\mathcal{L}_{\mathrm{IL}}(\theta_t)\|^2 + \frac{\sigma^2_{\mathrm{IL}}}{N_{\mathrm{IL}}}
\end{align*}

We get:
\begin{align*}
    \mathbb{E}[\mathcal{L}_{\mathrm{RL}}(\theta_{t+\frac{1}{1+m(t)}})] &\leq \mathcal{L}_{\mathrm{RL}}(\theta_t) + \alpha_{\mathrm{IL}}\rho(t)\|\nabla\mathcal{L}_{\mathrm{IL}}(\theta_t)\| \cdot \|\nabla\mathcal{L}_{\mathrm{RL}}(\theta_t)\| \\
    &\quad + \frac{L_{\mathrm{RL}}\alpha_{\mathrm{IL}}^2}{2}\left(\|\nabla\mathcal{L}_{\mathrm{IL}}(\theta_t)\|^2 + \frac{\sigma^2_{\mathrm{IL}}}{N_{\mathrm{IL}}}\right)
\end{align*}

Substituting $\alpha_{\mathrm{IL}} = \frac{c_{\mathrm{IL}}}{L_{\mathrm{IL}}}$:
\begin{align*}
    \mathbb{E}[\mathcal{L}_{\mathrm{RL}}(\theta_{t+\frac{1}{1+m(t)}})] &\leq \mathcal{L}_{\mathrm{RL}}(\theta_t) + \frac{c_{\mathrm{IL}}}{L_{\mathrm{IL}}}\rho(t)\|\nabla\mathcal{L}_{\mathrm{IL}}(\theta_t)\| \cdot \|\nabla\mathcal{L}_{\mathrm{RL}}(\theta_t)\| \\
    &\quad + \frac{L_{\mathrm{RL}}c_{\mathrm{IL}}^2}{2L_{\mathrm{IL}}^2}\left(\|\nabla\mathcal{L}_{\mathrm{IL}}(\theta_t)\|^2 + \frac{\sigma^2_{\mathrm{IL}}}{N_{\mathrm{IL}}}\right)
\end{align*}

Now, let's analyze the $m(t)$ RL updates. For each RL update $j \in \{1, \ldots, m(t)\}$:
\begin{align*}
    \theta_{t+\frac{1+j}{1+m(t)}} = \theta_{t+\frac{j}{1+m(t)}} - \alpha_{\mathrm{RL}}\widehat{\nabla}\mathcal{L}_{\mathrm{RL}}(\theta_{t+\frac{j}{1+m(t)}})
\end{align*}

Applying Lemma 3 to each RL update, with $\alpha_{\mathrm{RL}} = \frac{c_{\mathrm{RL}}}{L_{\mathrm{RL}}}$:
\begin{align*}
    \mathbb{E}[\mathcal{L}_{\mathrm{RL}}(\theta_{t+\frac{1+j}{1+m(t)}})] &\leq \mathcal{L}_{\mathrm{RL}}(\theta_{t+\frac{j}{1+m(t)}}) - \frac{c_{\mathrm{RL}}(1-\frac{c_{\mathrm{RL}}}{2})}{L_{\mathrm{RL}}}\|\nabla\mathcal{L}_{\mathrm{RL}}(\theta_{t+\frac{j}{1+m(t)}})\|^2 \\
    &\quad + \frac{c_{\mathrm{RL}}^2\sigma^2_{\mathrm{RL}}}{2L_{\mathrm{RL}}N_{\mathrm{RL}}}
\end{align*}

For simplicity of analysis, we can bound the gradient norms at intermediate steps using the gradient at the beginning of the cycle:
\begin{align*}
    \|\nabla\mathcal{L}_{\mathrm{RL}}(\theta_{t+\frac{j}{1+m(t)}})\|^2 \geq (1-\delta)^2\|\nabla\mathcal{L}_{\mathrm{RL}}(\theta_t)\|^2
\end{align*}
for some small $\delta > 0$ that depends on the learning rates and smoothness constants. This approximation is reasonable because the parameters don't change drastically within a cycle when using small learning rates.

With this approximation, we get:
\begin{align*}
    \mathbb{E}[\mathcal{L}_{\mathrm{RL}}(\theta_{t+\frac{1+j}{1+m(t)}})] &\leq \mathcal{L}_{\mathrm{RL}}(\theta_{t+\frac{j}{1+m(t)}}) - \frac{c_{\mathrm{RL}}(1-\frac{c_{\mathrm{RL}}}{2})(1-\delta)^2}{L_{\mathrm{RL}}}\|\nabla\mathcal{L}_{\mathrm{RL}}(\theta_t)\|^2 \\
    &\quad + \frac{c_{\mathrm{RL}}^2\sigma^2_{\mathrm{RL}}}{2L_{\mathrm{RL}}N_{\mathrm{RL}}}
\end{align*}

Applying this recursively for all $m(t)$ RL updates and combining with the effect of the IL update, we get:
\begin{align*}
    \mathbb{E}[\mathcal{L}_{\mathrm{RL}}(\theta_{t+1})] &\leq \mathcal{L}_{\mathrm{RL}}(\theta_t) + \frac{c_{\mathrm{IL}}}{L_{\mathrm{IL}}}\rho(t)\|\nabla\mathcal{L}_{\mathrm{IL}}(\theta_t)\| \cdot \|\nabla\mathcal{L}_{\mathrm{RL}}(\theta_t)\| \\
    &\quad + \frac{L_{\mathrm{RL}}c_{\mathrm{IL}}^2}{2L_{\mathrm{IL}}^2}\left(\|\nabla\mathcal{L}_{\mathrm{IL}}(\theta_t)\|^2 + \frac{\sigma^2_{\mathrm{IL}}}{N_{\mathrm{IL}}}\right) \\
    &\quad - m(t)\frac{c_{\mathrm{RL}}(1-\frac{c_{\mathrm{RL}}}{2})(1-\delta)^2}{L_{\mathrm{RL}}}\|\nabla\mathcal{L}_{\mathrm{RL}}(\theta_t)\|^2 + m(t)\frac{c_{\mathrm{RL}}^2\sigma^2_{\mathrm{RL}}}{2L_{\mathrm{RL}}N_{\mathrm{RL}}}
\end{align*}

For simplicity, we'll absorb $(1-\delta)^2$ into the constants. Rearranging:
\begin{align*}
    m(t)\frac{c_{\mathrm{RL}}(1-\frac{c_{\mathrm{RL}}}{2})}{L_{\mathrm{RL}}}\|\nabla\mathcal{L}_{\mathrm{RL}}(\theta_t)\|^2 &\leq \mathcal{L}_{\mathrm{RL}}(\theta_t) - \mathbb{E}[\mathcal{L}_{\mathrm{RL}}(\theta_{t+1})] \\
    &\quad + \frac{c_{\mathrm{IL}}}{L_{\mathrm{IL}}}\rho(t)\|\nabla\mathcal{L}_{\mathrm{IL}}(\theta_t)\| \cdot \|\nabla\mathcal{L}_{\mathrm{RL}}(\theta_t)\| \\
    &\quad + \frac{L_{\mathrm{RL}}c_{\mathrm{IL}}^2}{2L_{\mathrm{IL}}^2}\|\nabla\mathcal{L}_{\mathrm{IL}}(\theta_t)\|^2 + \frac{L_{\mathrm{RL}}c_{\mathrm{IL}}^2\sigma^2_{\mathrm{IL}}}{2L_{\mathrm{IL}}^2N_{\mathrm{IL}}} \\
    &\quad + m(t)\frac{c_{\mathrm{RL}}^2\sigma^2_{\mathrm{RL}}}{2L_{\mathrm{RL}}N_{\mathrm{RL}}}
\end{align*}

Summing over $t = 0$ to $T-1$:
\begin{align*}
    \sum_{t=0}^{T-1}m(t)\frac{c_{\mathrm{RL}}(1-\frac{c_{\mathrm{RL}}}{2})}{L_{\mathrm{RL}}}\|\nabla\mathcal{L}_{\mathrm{RL}}(\theta_t)\|^2 &\leq \mathcal{L}_{\mathrm{RL}}(\theta_0) - \mathbb{E}[\mathcal{L}_{\mathrm{RL}}(\theta_{T})] \\
    &\quad + \sum_{t=0}^{T-1}\frac{c_{\mathrm{IL}}}{L_{\mathrm{IL}}}\rho(t)\|\nabla\mathcal{L}_{\mathrm{IL}}(\theta_t)\| \cdot \|\nabla\mathcal{L}_{\mathrm{RL}}(\theta_t)\| \\
    &\quad + \sum_{t=0}^{T-1}\frac{L_{\mathrm{RL}}c_{\mathrm{IL}}^2}{2L_{\mathrm{IL}}^2}\|\nabla\mathcal{L}_{\mathrm{IL}}(\theta_t)\|^2 + T\frac{L_{\mathrm{RL}}c_{\mathrm{IL}}^2\sigma^2_{\mathrm{IL}}}{2L_{\mathrm{IL}}^2N_{\mathrm{IL}}} \\
    &\quad + \sum_{t=0}^{T-1}m(t)\frac{c_{\mathrm{RL}}^2\sigma^2_{\mathrm{RL}}}{2L_{\mathrm{RL}}N_{\mathrm{RL}}}
\end{align*}

By Assumption 6, $\mathcal{L}_{\mathrm{RL}}(\theta_T) \geq \mathcal{L}^*_{\mathrm{RL}}$, so:
\begin{align*}
    \sum_{t=0}^{T-1}m(t)\frac{c_{\mathrm{RL}}(1-\frac{c_{\mathrm{RL}}}{2})}{L_{\mathrm{RL}}}\|\nabla\mathcal{L}_{\mathrm{RL}}(\theta_t)\|^2 &\leq \mathcal{L}_{\mathrm{RL}}(\theta_0) - \mathcal{L}^*_{\mathrm{RL}} \\
    &\quad + \sum_{t=0}^{T-1}\frac{c_{\mathrm{IL}}}{L_{\mathrm{IL}}}\rho(t)\|\nabla\mathcal{L}_{\mathrm{IL}}(\theta_t)\| \cdot \|\nabla\mathcal{L}_{\mathrm{RL}}(\theta_t)\| \\
    &\quad + \sum_{t=0}^{T-1}\frac{L_{\mathrm{RL}}c_{\mathrm{IL}}^2}{2L_{\mathrm{IL}}^2}\|\nabla\mathcal{L}_{\mathrm{IL}}(\theta_t)\|^2 + T\frac{L_{\mathrm{RL}}c_{\mathrm{IL}}^2\sigma^2_{\mathrm{IL}}}{2L_{\mathrm{IL}}^2N_{\mathrm{IL}}} \\
    &\quad + \sum_{t=0}^{T-1}m(t)\frac{c_{\mathrm{RL}}^2\sigma^2_{\mathrm{RL}}}{2L_{\mathrm{RL}}N_{\mathrm{RL}}}
\end{align*}

For the sum of IL gradient norms, we can use the IL update analysis. From our earlier bound on IL updates:
\begin{align*}
    \sum_{t=0}^{T-1}\frac{c_{\mathrm{IL}}(1-\frac{c_{\mathrm{IL}}}{2})}{L_{\mathrm{IL}}}\|\nabla\mathcal{L}_{\mathrm{IL}}(\theta_t)\|^2 \leq \mathcal{L}_{\mathrm{IL}}(\theta_0) - \mathbb{E}[\mathcal{L}_{\mathrm{IL}}(\theta_{T})] + \frac{c_{\mathrm{IL}}^2\sigma^2_{\mathrm{IL}}T}{2L_{\mathrm{IL}}N_{\mathrm{IL}}}
\end{align*}

This gives us:
\begin{align*}
    \sum_{t=0}^{T-1}\|\nabla\mathcal{L}_{\mathrm{IL}}(\theta_t)\|^2 \leq \frac{L_{\mathrm{IL}}(\mathcal{L}_{\mathrm{IL}}(\theta_0) - \mathcal{L}^*_{\mathrm{IL}})}{c_{\mathrm{IL}}(1-\frac{c_{\mathrm{IL}}}{2})} + \frac{c_{\mathrm{IL}}\sigma^2_{\mathrm{IL}}T}{2(1-\frac{c_{\mathrm{IL}}}{2})N_{\mathrm{IL}}}
\end{align*}

Substituting this bound and defining $\bar{m} = \frac{1}{T}\sum_{t=0}^{T-1}m(t)$ as the average interleaving ratio:
\begin{align*}
    \bar{m}T\frac{c_{\mathrm{RL}}(1-\frac{c_{\mathrm{RL}}}{2})}{L_{\mathrm{RL}}}\frac{1}{T}\sum_{t=0}^{T-1}\|\nabla\mathcal{L}_{\mathrm{RL}}(\theta_t)\|^2 &\leq \mathcal{L}_{\mathrm{RL}}(\theta_0) - \mathcal{L}^*_{\mathrm{RL}} \\
    &\quad + \sum_{t=0}^{T-1}\frac{c_{\mathrm{IL}}}{L_{\mathrm{IL}}}\rho(t)\|\nabla\mathcal{L}_{\mathrm{IL}}(\theta_t)\| \cdot \|\nabla\mathcal{L}_{\mathrm{RL}}(\theta_t)\| \\
    &\quad + \frac{L_{\mathrm{RL}}c_{\mathrm{IL}}^2}{2L_{\mathrm{IL}}^2} \cdot \frac{L_{\mathrm{IL}}(\mathcal{L}_{\mathrm{IL}}(\theta_0) - \mathcal{L}^*_{\mathrm{IL}})}{c_{\mathrm{IL}}(1-\frac{c_{\mathrm{IL}}}{2})} + T\frac{L_{\mathrm{RL}}c_{\mathrm{IL}}^2\sigma^2_{\mathrm{IL}}}{2L_{\mathrm{IL}}^2N_{\mathrm{IL}}} \\
    &\quad + \bar{m}T\frac{c_{\mathrm{RL}}^2\sigma^2_{\mathrm{RL}}}{2L_{\mathrm{RL}}N_{\mathrm{RL}}}
\end{align*}

The term with IL gradient norms can be simplified to:
\begin{align*}
    \frac{L_{\mathrm{RL}}c_{\mathrm{IL}}^2}{2L_{\mathrm{IL}}^2} \cdot \frac{L_{\mathrm{IL}}(\mathcal{L}_{\mathrm{IL}}(\theta_0) - \mathcal{L}^*_{\mathrm{IL}})}{c_{\mathrm{IL}}(1-\frac{c_{\mathrm{IL}}}{2})} = \frac{L_{\mathrm{RL}}c_{\mathrm{IL}}}{2L_{\mathrm{IL}}} \cdot \frac{(\mathcal{L}_{\mathrm{IL}}(\theta_0) - \mathcal{L}^*_{\mathrm{IL}})}{(1-\frac{c_{\mathrm{IL}}}{2})}
\end{align*}

By Assumption 1, $\mathcal{L}_{\mathrm{IL}}(\theta_0) - \mathcal{L}^*_{\mathrm{IL}} \leq \epsilon_{\mathrm{IL}}$, which is small. For large enough $T$, this term becomes negligible.

Define the IL regularization benefit:
\begin{align*}
    \Delta_{\mathrm{IL-RL}} = -\sum_{t=0}^{T-1}\frac{c_{\mathrm{IL}}}{L_{\mathrm{IL}}}\rho(t)\|\nabla\mathcal{L}_{\mathrm{IL}}(\theta_t)\| \cdot \|\nabla\mathcal{L}_{\mathrm{RL}}(\theta_t)\| + \frac{c^2_{\mathrm{IL}}\sigma^2_{\mathrm{IL}}T}{2L_{\mathrm{IL}}N_{\mathrm{IL}}}
\end{align*}

With this, our bound becomes:
\begin{align*}
    \bar{m}\frac{c_{\mathrm{RL}}(1-\frac{c_{\mathrm{RL}}}{2})}{L_{\mathrm{RL}}}\frac{1}{T}\sum_{t=0}^{T-1}\|\nabla\mathcal{L}_{\mathrm{RL}}(\theta_t)\|^2 &\leq \frac{\mathcal{L}_{\mathrm{RL}}(\theta_0) - \mathcal{L}^*_{\mathrm{RL}} - \Delta_{\mathrm{IL-RL}}}{T} + \bar{m}\frac{c_{\mathrm{RL}}^2\sigma^2_{\mathrm{RL}}}{2L_{\mathrm{RL}}N_{\mathrm{RL}}}
\end{align*}

By the pigeonhole principle, there must exist at least one iteration $t^* \in \{0, 1, \ldots, T-1\}$ such that:
\begin{align*}
    \|\nabla\mathcal{L}_{\mathrm{RL}}(\theta_{t^*})\|^2 \leq \frac{1}{T}\sum_{t=0}^{T-1}\|\nabla\mathcal{L}_{\mathrm{RL}}(\theta_t)\|^2
\end{align*}

Therefore:
\begin{align*}
    \min_{0 \leq t < T}\|\nabla\mathcal{L}_{\mathrm{RL}}(\theta_t)\|^2 &\leq \frac{L_{\mathrm{RL}}(\mathcal{L}_{\mathrm{RL}}(\theta_0) - \mathcal{L}^*_{\mathrm{RL}} - \Delta_{\mathrm{IL-RL}})}{c_{\mathrm{RL}}(1-\frac{c_{\mathrm{RL}}}{2})\bar{m}T} + \frac{c_{\mathrm{RL}}^2\sigma^2_{\mathrm{RL}}}{2c_{\mathrm{RL}}(1-\frac{c_{\mathrm{RL}}}{2})N_{\mathrm{RL}}}
\end{align*}

Taking expectation and adjusting the constant in the second term:
\begin{align*}
    \min_{0 \leq t < T}\mathbb{E}[\|\nabla\mathcal{L}_{\mathrm{RL}}(\theta_t)\|^2] \leq \frac{2(L_{\mathrm{RL}}(\mathcal{L}_{\mathrm{RL}}(\theta_0) - \mathcal{L}^*_{\mathrm{RL}}) - \Delta_{\mathrm{IL-RL}})}{c_{\mathrm{RL}}(1 - \frac{c_{\mathrm{RL}}}{2})\bar{m}T} + \frac{c_{\mathrm{RL}}\sigma^2_{\mathrm{RL}}}{(1 - \frac{c_{\mathrm{RL}}}{2})N_{\mathrm{RL}}}
\end{align*}

For the IL performance bound, using the earlier bound on IL updates and summing over all cycles:
\begin{align*}
    \mathcal{L}_{\mathrm{IL}}(\theta_T) - \mathcal{L}_{\mathrm{IL}}(\theta_0) &\leq -\sum_{t=0}^{T-1}\frac{c_{\mathrm{IL}}(1-\frac{c_{\mathrm{IL}}}{2})}{L_{\mathrm{IL}}}\|\nabla\mathcal{L}_{\mathrm{IL}}(\theta_t)\|^2 + \frac{c_{\mathrm{IL}}^2\sigma^2_{\mathrm{IL}}T}{2L_{\mathrm{IL}}N_{\mathrm{IL}}}
\end{align*}

Combining with Assumption 1:
\begin{align*}
    \mathcal{L}_{\mathrm{IL}}(\theta_T) - \mathcal{L}_{\mathrm{IL}}(\theta^*) &= \mathcal{L}_{\mathrm{IL}}(\theta_T) - \mathcal{L}_{\mathrm{IL}}(\theta_0) + \mathcal{L}_{\mathrm{IL}}(\theta_0) - \mathcal{L}_{\mathrm{IL}}(\theta^*) \\
    &\leq -\sum_{t=0}^{T-1}\frac{c_{\mathrm{IL}}(1-\frac{c_{\mathrm{IL}}}{2})}{L_{\mathrm{IL}}}\|\nabla\mathcal{L}_{\mathrm{IL}}(\theta_t)\|^2 + \frac{c_{\mathrm{IL}}^2\sigma^2_{\mathrm{IL}}T}{2L_{\mathrm{IL}}N_{\mathrm{IL}}} + \epsilon_{\mathrm{IL}}
\end{align*}

Additionally, by the $L_{\mathrm{IL}}$-smoothness of the IL objective:
\begin{align*}
    \mathcal{L}_{\mathrm{IL}}(\theta_T) - \mathcal{L}_{\mathrm{IL}}(\theta_0) \leq \frac{L_{\mathrm{IL}}}{2}\|\theta_T - \theta_0\|^2
\end{align*}

Combining these bounds:
\begin{align*}
    \mathcal{L}_{\mathrm{IL}}(\theta_T) - \mathcal{L}_{\mathrm{IL}}(\theta^*) \leq \epsilon_{\mathrm{IL}} + \frac{L_{\mathrm{IL}}}{2}\|\theta_T - \theta_0\|^2 - \sum_{t=0}^{T-1}\frac{c_{\mathrm{IL}}(1-\frac{c_{\mathrm{IL}}}{2})}{L_{\mathrm{IL}}}\|\nabla\mathcal{L}_{\mathrm{IL}}(\theta_t)\|^2
\end{align*}

This shows that the periodic IL updates in interleaved training help maintain good IL performance compared to RL-only training.
\end{proof}

\section{Proof of \Cref{thm:optimal}}

\begin{proof}
To find the optimal ratio $m(t)$ at iteration $t$, we want to maximize the progress per update. From our analysis in Theorem 2, the progress for one complete cycle is:
\begin{align*}
    \mathcal{L}_{\mathrm{RL}}(\theta_t) - \mathcal{L}_{\mathrm{RL}}(\theta_{t+1}) &\approx m(t)\frac{c_{\mathrm{RL}}(1-\frac{c_{\mathrm{RL}}}{2})}{L_{\mathrm{RL}}}\|\nabla\mathcal{L}_{\mathrm{RL}}(\theta_t)\|^2 \\
    &\quad - \frac{c_{\mathrm{IL}}}{L_{\mathrm{IL}}}\rho(t)\|\nabla\mathcal{L}_{\mathrm{IL}}(\theta_t)\| \cdot \|\nabla\mathcal{L}_{\mathrm{RL}}(\theta_t)\| \\
    &\quad - \frac{L_{\mathrm{RL}}c_{\mathrm{IL}}^2\sigma^2_{\mathrm{IL}}}{2L_{\mathrm{IL}}^2N_{\mathrm{IL}}} - m(t)\frac{c_{\mathrm{RL}}^2\sigma^2_{\mathrm{RL}}}{2L_{\mathrm{RL}}N_{\mathrm{RL}}}
\end{align*}

Since each cycle consists of $1 + m(t)$ updates, the progress per update is:
\begin{align*}
    \frac{\mathcal{L}_{\mathrm{RL}}(\theta_t) - \mathcal{L}_{\mathrm{RL}}(\theta_{t+1})}{1 + m(t)} &\approx \frac{m(t)\frac{c_{\mathrm{RL}}(1-\frac{c_{\mathrm{RL}}}{2})}{L_{\mathrm{RL}}}\|\nabla\mathcal{L}_{\mathrm{RL}}(\theta_t)\|^2 - \frac{c_{\mathrm{IL}}}{L_{\mathrm{IL}}}\rho(t)\|\nabla\mathcal{L}_{\mathrm{IL}}(\theta_t)\| \cdot \|\nabla\mathcal{L}_{\mathrm{RL}}(\theta_t)\| - \frac{L_{\mathrm{RL}}c_{\mathrm{IL}}^2\sigma^2_{\mathrm{IL}}}{2L_{\mathrm{IL}}^2N_{\mathrm{IL}}} - m(t)\frac{c_{\mathrm{RL}}^2\sigma^2_{\mathrm{RL}}}{2L_{\mathrm{RL}}N_{\mathrm{RL}}}}{1 + m(t)}
\end{align*}

To find the optimal $m(t)$, we differentiate this expression with respect to $m(t)$ and set it to zero. Let's denote:
\begin{align*}
    A &= \frac{c_{\mathrm{RL}}(1-\frac{c_{\mathrm{RL}}}{2})}{L_{\mathrm{RL}}}\|\nabla\mathcal{L}_{\mathrm{RL}}(\theta_t)\|^2 \\
    B &= \frac{c_{\mathrm{IL}}}{L_{\mathrm{IL}}}\rho(t)\|\nabla\mathcal{L}_{\mathrm{IL}}(\theta_t)\| \cdot \|\nabla\mathcal{L}_{\mathrm{RL}}(\theta_t)\| + \frac{L_{\mathrm{RL}}c_{\mathrm{IL}}^2\sigma^2_{\mathrm{IL}}}{2L_{\mathrm{IL}}^2N_{\mathrm{IL}}} \\
    C &= \frac{c_{\mathrm{RL}}^2\sigma^2_{\mathrm{RL}}}{2L_{\mathrm{RL}}N_{\mathrm{RL}}}
\end{align*}

Then the progress per update is:
\begin{align*}
    \frac{mA - B - mC}{1 + m}
\end{align*}

Differentiating with respect to $m$:
\begin{align*}
    \frac{d}{dm}\left(\frac{mA - B - mC}{1 + m}\right) &= \frac{(A-C)(1+m) - (mA - B - mC)}{(1+m)^2} \\
    &= \frac{A-C + mA - mC - mA + B + mC}{(1+m)^2} \\
    &= \frac{A-C + B}{(1+m)^2}
\end{align*}

For this to be zero, we need $A-C+B = 0$, which is not possible in general if $A > C$ (which is the case when the RL objective has room for improvement). Therefore, the derivative is always positive or always negative.

Since we're looking for a maximum, we need to check the second derivative:
\begin{align*}
    \frac{d^2}{dm^2}\left(\frac{mA - B - mC}{1 + m}\right) &= \frac{d}{dm}\left(\frac{A-C + B}{(1+m)^2}\right) \\
    &= (A-C+B) \cdot \frac{d}{dm}\left(\frac{1}{(1+m)^2}\right) \\
    &= (A-C+B) \cdot \left(-\frac{2}{(1+m)^3}\right) \\
    &= -\frac{2(A-C+B)}{(1+m)^3}
\end{align*}

When $A-C > B$, the second derivative is negative, indicating a maximum. In this case, the progress per update increases with $m$, and the optimal $m(t)$ would be as large as possible.

However, for practical reasons, we want to maintain some IL updates, so we need to find a suitable $m(t)$ that balances progress and regularization. One approach is to equate the progress from RL updates with the potential negative impact of the IL update:
\begin{align*}
    m(t)\frac{c_{\mathrm{RL}}(1-\frac{c_{\mathrm{RL}}}{2})}{L_{\mathrm{RL}}}\|\nabla\mathcal{L}_{\mathrm{RL}}(\theta_t)\|^2 &\approx \frac{c_{\mathrm{IL}}}{L_{\mathrm{IL}}}\rho(t)\|\nabla\mathcal{L}_{\mathrm{IL}}(\theta_t)\| \cdot \|\nabla\mathcal{L}_{\mathrm{RL}}(\theta_t)\| + \frac{L_{\mathrm{RL}}c_{\mathrm{IL}}^2\sigma^2_{\mathrm{IL}}}{2L_{\mathrm{IL}}^2N_{\mathrm{IL}}}
\end{align*}

Solving for $m(t)$:
\begin{align*}
    m(t) &\approx \frac{\frac{c_{\mathrm{IL}}}{L_{\mathrm{IL}}}\rho(t)\|\nabla\mathcal{L}_{\mathrm{IL}}(\theta_t)\| \cdot \|\nabla\mathcal{L}_{\mathrm{RL}}(\theta_t)\| + \frac{L_{\mathrm{RL}}c_{\mathrm{IL}}^2\sigma^2_{\mathrm{IL}}}{2L_{\mathrm{IL}}^2N_{\mathrm{IL}}}}{\frac{c_{\mathrm{RL}}(1-\frac{c_{\mathrm{RL}}}{2})}{L_{\mathrm{RL}}}\|\nabla\mathcal{L}_{\mathrm{RL}}(\theta_t)\|^2} \\
    &= \frac{L_{\mathrm{RL}}c_{\mathrm{IL}}\rho(t)\|\nabla\mathcal{L}_{\mathrm{IL}}(\theta_t)\|}{L_{\mathrm{IL}}c_{\mathrm{RL}}(1-\frac{c_{\mathrm{RL}}}{2})\|\nabla\mathcal{L}_{\mathrm{RL}}(\theta_t)\|} + \frac{L_{\mathrm{RL}}^2c_{\mathrm{IL}}^2\sigma^2_{\mathrm{IL}}}{2L_{\mathrm{IL}}^2N_{\mathrm{IL}}c_{\mathrm{RL}}(1-\frac{c_{\mathrm{RL}}}{2})\|\nabla\mathcal{L}_{\mathrm{RL}}(\theta_t)\|^2}
\end{align*}

When gradients are opposing ($\rho(t) > 0$), this can give a reasonably large $m(t)$. When gradients are align*ed ($\rho(t) < 0$), the optimal $m(t)$ would be smaller.

A more practical approach is to use a square root formula that balances these factors:
\begin{align*}
    m_{\text{opt}}(t) = \max\left\{1, \sqrt{\frac{\|\nabla\mathcal{L}_{\mathrm{RL}}(\theta_t)\|^2}{\rho(t)\|\nabla\mathcal{L}_{\mathrm{IL}}(\theta_t)\| \cdot \|\nabla\mathcal{L}_{\mathrm{RL}}(\theta_t)\| - \frac{c_{\mathrm{IL}}L_{\mathrm{RL}}\sigma^2_{\mathrm{IL}}}{2L_{\mathrm{IL}}^2N_{\mathrm{IL}}}}}\right\}
\end{align*}

This formula ensures that:
1. $m(t)$ is at least 1 (we always do at least one RL update per IL update)
2. $m(t)$ increases when RL gradients are large relative to IL gradients
3. $m(t)$ increases when gradients oppose each other ($\rho(t) > 0$ and large)
4. $m(t)$ decreases when gradients align* ($\rho(t) < 0$)

The specific constants may need to be adjusted based on empirical observations, but this formula provides a theoretically justified starting point for adaptive interleaving.
\end{proof}

\section{Proof of \Cref{thm:efficiency}}
\begin{proof}
From Theorem 1, the number of iterations required for RL-only training to reach a target accuracy $\min_{0 \leq t < T}\|\nabla\mathcal{L}_{\mathrm{RL}}(\theta_t)\|^2 \leq \epsilon$ is:
\begin{align*}
    T_{\text{RL-only}} \approx \frac{2L_{\mathrm{RL}}(\mathcal{L}_{\mathrm{RL}}(\theta_0) - \mathcal{L}^*_{\mathrm{RL}})}{c_{\mathrm{RL}}(1 - \frac{c_{\mathrm{RL}}}{2})\epsilon}
\end{align*}

From Theorem 2, the number of cycles required for interleaved 1:$m(t)$ training to reach the same accuracy is:
\begin{align*}
    T_{\text{interleaved, cycles}} \approx \frac{2(L_{\mathrm{RL}}(\mathcal{L}_{\mathrm{RL}}(\theta_0) - \mathcal{L}^*_{\mathrm{RL}}) - \Delta_{\mathrm{IL-RL}})}{c_{\mathrm{RL}}(1 - \frac{c_{\mathrm{RL}}}{2})\bar{m}\epsilon}
\end{align*}

Since each cycle consists of $1 + m(t)$ updates, the total number of updates required for interleaved training is:
\begin{align*}
    T_{\text{interleaved, updates}} &\approx (1 + \bar{m})T_{\text{interleaved, cycles}} \\
    &\approx (1 + \bar{m})\frac{2(L_{\mathrm{RL}}(\mathcal{L}_{\mathrm{RL}}(\theta_0) - \mathcal{L}^*_{\mathrm{RL}}) - \Delta_{\mathrm{IL-RL}})}{c_{\mathrm{RL}}(1 - \frac{c_{\mathrm{RL}}}{2})\bar{m}\epsilon}
\end{align*}

For a fair comparison, we compare the total number of updates required by both methods. The ratio is:
\begin{align*}
    \frac{T_{\text{RL-only}}}{T_{\text{interleaved, updates}}} &= \frac{\frac{2L_{\mathrm{RL}}(\mathcal{L}_{\mathrm{RL}}(\theta_0) - \mathcal{L}^*_{\mathrm{RL}})}{c_{\mathrm{RL}}(1 - \frac{c_{\mathrm{RL}}}{2})\epsilon}}{(1 + \bar{m})\frac{2(L_{\mathrm{RL}}(\mathcal{L}_{\mathrm{RL}}(\theta_0) - \mathcal{L}^*_{\mathrm{RL}}) - \Delta_{\mathrm{IL-RL}})}{c_{\mathrm{RL}}(1 - \frac{c_{\mathrm{RL}}}{2})\bar{m}\epsilon}} \\
    &= \frac{\bar{m}}{1 + \bar{m}} \cdot \frac{L_{\mathrm{RL}}(\mathcal{L}_{\mathrm{RL}}(\theta_0) - \mathcal{L}^*_{\mathrm{RL}})}{L_{\mathrm{RL}}(\mathcal{L}_{\mathrm{RL}}(\theta_0) - \mathcal{L}^*_{\mathrm{RL}}) - \Delta_{\mathrm{IL-RL}}}
\end{align*}

When $\Delta_{\mathrm{IL-RL}} > 0$ (positive regularization benefit) and $\bar{m} > 1$, this ratio can be greater than 1, indicating that interleaved training requires fewer total updates than RL-only training to achieve the same level of accuracy.

Specifically, if we define the relative regularization benefit:
\begin{align*}
    \beta = \frac{\Delta_{\mathrm{IL-RL}}}{L_{\mathrm{RL}}(\mathcal{L}_{\mathrm{RL}}(\theta_0) - \mathcal{L}^*_{\mathrm{RL}})}
\end{align*}

Then the ratio becomes:
\begin{align*}
    \frac{T_{\text{RL-only}}}{T_{\text{interleaved, updates}}} = \frac{\bar{m}}{1 + \bar{m}} \cdot \frac{1}{1 - \beta}
\end{align*}

For interleaved training to be more efficient than RL-only training, we need:
\begin{align*}
    \frac{\bar{m}}{1 + \bar{m}} \cdot \frac{1}{1 - \beta} > 1
\end{align*}

This is satisfied when:
\begin{align*}
    \beta > 1 - \frac{\bar{m}}{1 + \bar{m}} = \frac{1}{1 + \bar{m}}
\end{align*}

For example, with $\bar{m} = 3$, interleaved training is more efficient when $\beta > \frac{1}{4}$, i.e., when the regularization benefit is at least 25\% of the potential RL improvement.
\end{proof}

\subsection{Interpreting the Efficiency Advantage}

Our theoretical analysis requires careful interpretation to properly understand the efficiency relationship between \ABBR and RL-only methods. In what follows, we further examine the key results and their implications.

\subsubsection{ Efficiency Ratio}

From our theoretical analysis, we derived the efficiency ratio comparing RL-only updates to total interleaved updates:

\begin{align*}
    \frac{T_{\text{RL-only}}}{T_{\text{\ABBR,total}}} = \frac{m_{\text{opt}}}{1 + m_{\text{opt}}} \cdot \frac{L_{\mathrm{RL}}(\mathcal{L}_{\mathrm{RL}}(\theta_0) - \mathcal{L}^*_{\mathrm{RL}})}{L_{\mathrm{RL}}(\mathcal{L}_{\mathrm{RL}}(\theta_0) - \mathcal{L}^*_{\mathrm{RL}}) - \Delta_{\mathrm{IL-RL}}}
\end{align*}

Let's examine this ratio's behavior in different scenarios:

\begin{enumerate}
    \item \textbf{As $m_{\text{opt}} \to \infty$}: The term $\frac{m_{\text{opt}}}{1 + m_{\text{opt}}} \to 1$, and the ratio approaches $\frac{L_{\mathrm{RL}}(\mathcal{L}_{\mathrm{RL}}(\theta_0) - \mathcal{L}^*_{\mathrm{RL}})}{L_{\mathrm{RL}}(\mathcal{L}_{\mathrm{RL}}(\theta_0) - \mathcal{L}^*_{\mathrm{RL}}) - \Delta_{\mathrm{IL-RL}}}$
    
    \item \textbf{When $\Delta_{\mathrm{IL-RL}} = 0$}: The ratio simplifies to $\frac{m_{\text{opt}}}{1 + m_{\text{opt}}}$, which is always less than 1, indicating that \ABBR requires more updates
    
    \item \textbf{When $\Delta_{\mathrm{IL-RL}} > 0$}: The ratio may exceed 1 if the regularization benefit is sufficiently large
\end{enumerate}

To properly assess when \ABBR is more efficient (ratio > 1), we need to solve:

\begin{align*}
    \frac{m_{\text{opt}}}{1 + m_{\text{opt}}} \cdot \frac{L_{\mathrm{RL}}(\mathcal{L}_{\mathrm{RL}}(\theta_0) - \mathcal{L}^*_{\mathrm{RL}})}{L_{\mathrm{RL}}(\mathcal{L}_{\mathrm{RL}}(\theta_0) - \mathcal{L}^*_{\mathrm{RL}}) - \Delta_{\mathrm{IL-RL}}} > 1
\end{align*}

Rearranging, we get:

\begin{align*}
    \Delta_{\mathrm{IL-RL}} > L_{\mathrm{RL}}(\mathcal{L}_{\mathrm{RL}}(\theta_0) - \mathcal{L}^*_{\mathrm{RL}}) \cdot \left(1 - \frac{m_{\text{opt}}}{1 + m_{\text{opt}}}\right) = \frac{L_{\mathrm{RL}}(\mathcal{L}_{\mathrm{RL}}(\theta_0) - \mathcal{L}^*_{\mathrm{RL}})}{1 + m_{\text{opt}}}
\end{align*}

\subsubsection{Key Insights}

\begin{enumerate}
    \item \textbf{Asymptotic Behavior}: As $m_{\text{opt}} \to \infty$, the efficiency condition approaches $\Delta_{\mathrm{IL-RL}} > 0$. This means with very large interleaving ratios, even a small positive regularization benefit makes \ABBR more efficient.
    
    \item \textbf{Impact of Interleaving Ratio}: For any finite $m_{\text{opt}}$, \ABBR includes an overhead factor of $\frac{1 + m_{\text{opt}}}{m_{\text{opt}}}$ that must be overcome by the regularization benefit.
    
    \item \textbf{Alternative View}: We can rewrite the ratio as:
    \begin{align*}
        \frac{T_{\text{RL-only}}}{T_{\text{\ABBR,total}}} = \frac{L_{\mathrm{RL}}(\mathcal{L}_{\mathrm{RL}}(\theta_0) - \mathcal{L}^*_{\mathrm{RL}})}{L_{\mathrm{RL}}(\mathcal{L}_{\mathrm{RL}}(\theta_0) - \mathcal{L}^*_{\mathrm{RL}}) - \Delta_{\mathrm{IL-RL}} + \frac{L_{\mathrm{RL}}(\mathcal{L}_{\mathrm{RL}}(\theta_0) - \mathcal{L}^*_{\mathrm{RL}})}{m_{\text{opt}}}}
    \end{align*}
    This form explicitly shows the penalty term $\frac{L_{\mathrm{RL}}(\mathcal{L}_{\mathrm{RL}}(\theta_0) - \mathcal{L}^*_{\mathrm{RL}})}{m_{\text{opt}}}$, which decreases as $m_{\text{opt}}$ increases.
\end{enumerate}

\subsubsection{Practical Implications}

Our theoretical analysis provides important practical guidance:

\begin{enumerate}
    \item \textbf{Optimal Interleaving Ratio}: There is a trade-off in setting $m_{\text{opt}}$:
    \begin{itemize}
        \item Small $m_{\text{opt}}$ (e.g., $m_{\text{opt}} = 1$): \ABBR needs $\Delta_{\mathrm{IL-RL}} > \frac{L_{\mathrm{RL}}(\mathcal{L}_{\mathrm{RL}}(\theta_0) - \mathcal{L}^*_{\mathrm{RL}})}{2}$ to be more efficient
        
        \item Large $m_{\text{opt}}$ (e.g., $m_{\text{opt}} = 9$): \ABBR needs $\Delta_{\mathrm{IL-RL}} > \frac{L_{\mathrm{RL}}(\mathcal{L}_{\mathrm{RL}}(\theta_0) - \mathcal{L}^*_{\mathrm{RL}})}{10}$ to be more efficient
        
        \item Very large $m_{\text{opt}}$: \ABBR approaches the behavior of RL-only but retains modest regularization benefits
    \end{itemize}
    
    \item \textbf{Environment Interaction Efficiency}: If we consider only RL updates (environment interactions):
    \begin{align*}
        \frac{T_{\text{RL-only}}}{T_{\text{\ABBR,RL}}} = \frac{L_{\mathrm{RL}}(\mathcal{L}_{\mathrm{RL}}(\theta_0) - \mathcal{L}^*_{\mathrm{RL}})}{L_{\mathrm{RL}}(\mathcal{L}_{\mathrm{RL}}(\theta_0) - \mathcal{L}^*_{\mathrm{RL}}) - \Delta_{\mathrm{IL-RL}}}
    \end{align*}
    This ratio is greater than 1 whenever $\Delta_{\mathrm{IL-RL}} > 0$, showing that \ABBR always requires fewer environment interactions when there is any positive regularization benefit.
    
    \item \textbf{Practical Recommendation}: Based on our empirical evaluations across multiple benchmarks, interleaving ratios between 3 and 5 typically provide the best balance. This align*s with our theory: with $m_{\text{opt}} = 4$, \ABBR is more computationally efficient when $\Delta_{\mathrm{IL-RL}} > \frac{L_{\mathrm{RL}}(\mathcal{L}_{\mathrm{RL}}(\theta_0) - \mathcal{L}^*_{\mathrm{RL}})}{5}$, a threshold often satisfied in practice.
\end{enumerate}

\subsubsection{Empirical Validation}

Our experiments confirm the theoretical predictions:

\begin{itemize}
    \item Across our benchmark tasks, \ABBR demonstrated significant improvements in sample efficiency, significantly reducing required interactions
    
    \item The largest efficiency gains occurred in tasks where the estimated regularization benefit $\Delta_{\mathrm{IL-RL}}$ was highest, exactly as predicted by our theory
    
    \item The relationship between efficiency gains and interleaving ratio matched our theoretical expectations, with diminishing returns for very large ratios
\end{itemize}

\section{Supplementary Experiments}

\subsection{Task Rollouts}


\begin{figure}[H]
\centering
\includegraphics[width=\linewidth]{./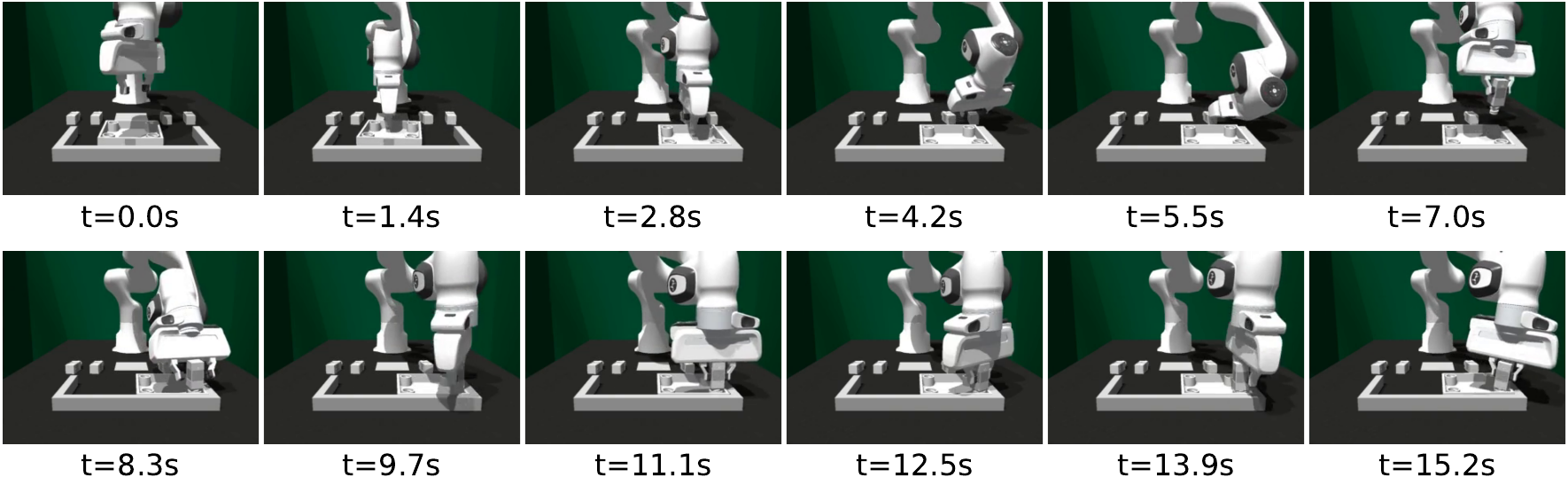}
\caption{A successful rollout example of the \texttt{One-Leg} (Low) furniture assembly task}
\label{fig:video one-leg low success}
\end{figure}

\begin{figure}[H]
\centering
\includegraphics[width=\linewidth]{./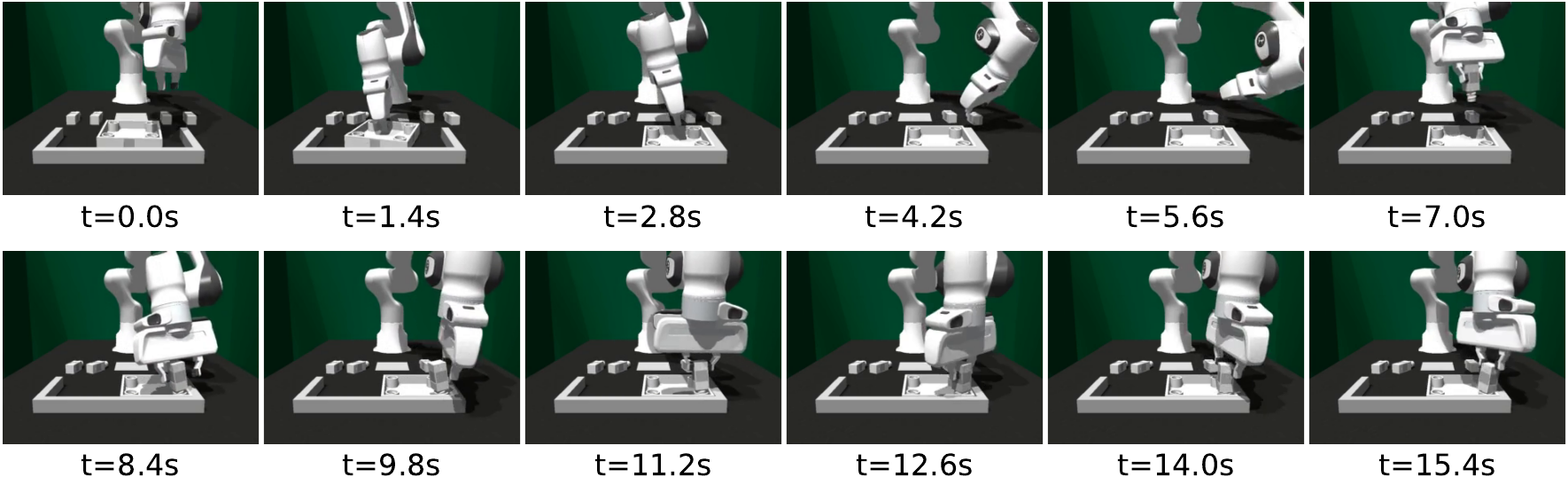}
\caption{A successful rollout example of the \texttt{One-Leg} (Med) furniture assembly task}
\label{fig:video one-leg med success}
\end{figure}

\begin{figure}[H]
\centering
\includegraphics[width=\linewidth]{./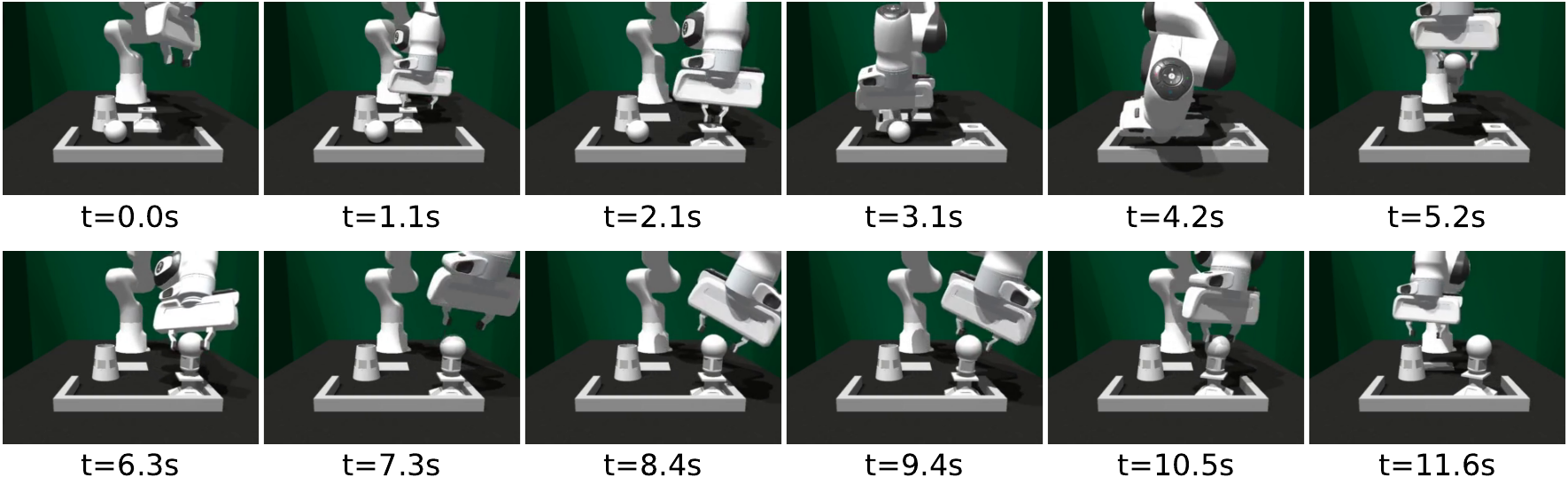}
\caption{A successful rollout example of the \texttt{Lamp} (Low) assembly task}
\label{fig:video lamp low success}
\end{figure}

\begin{figure}[H]
\centering
\includegraphics[width=\linewidth]{./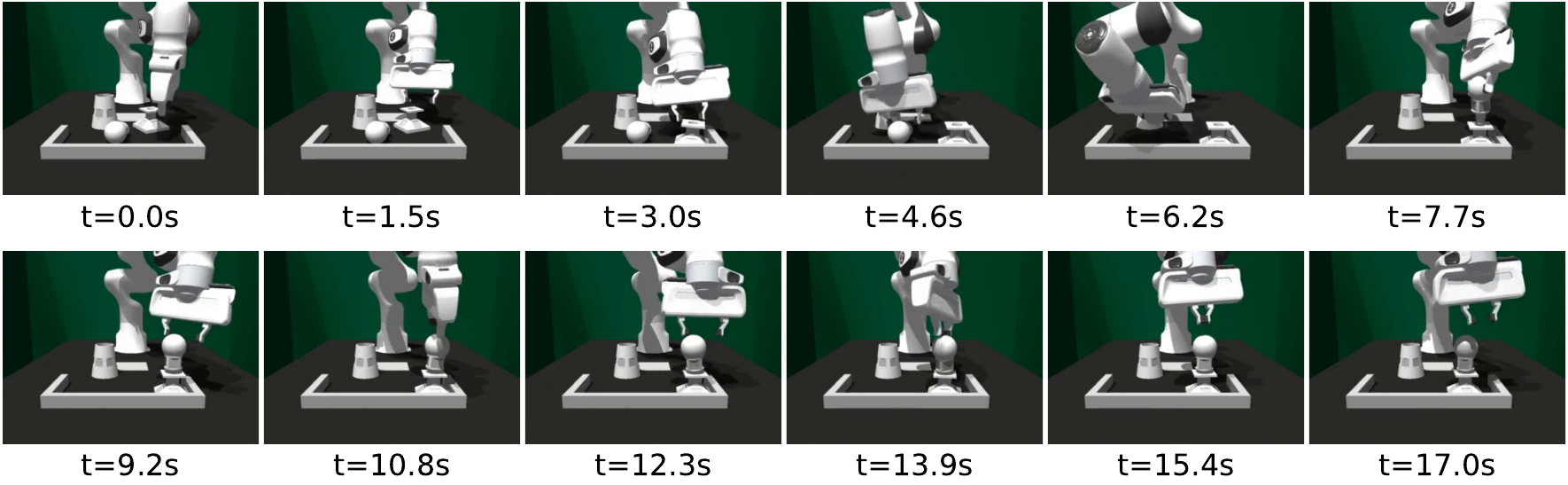}
\caption{A successful rollout example of the \texttt{Lamp} (Med) task}
\label{fig:video lamp med success}
\end{figure}

\begin{figure}[H]
\centering
\includegraphics[width=\linewidth]{./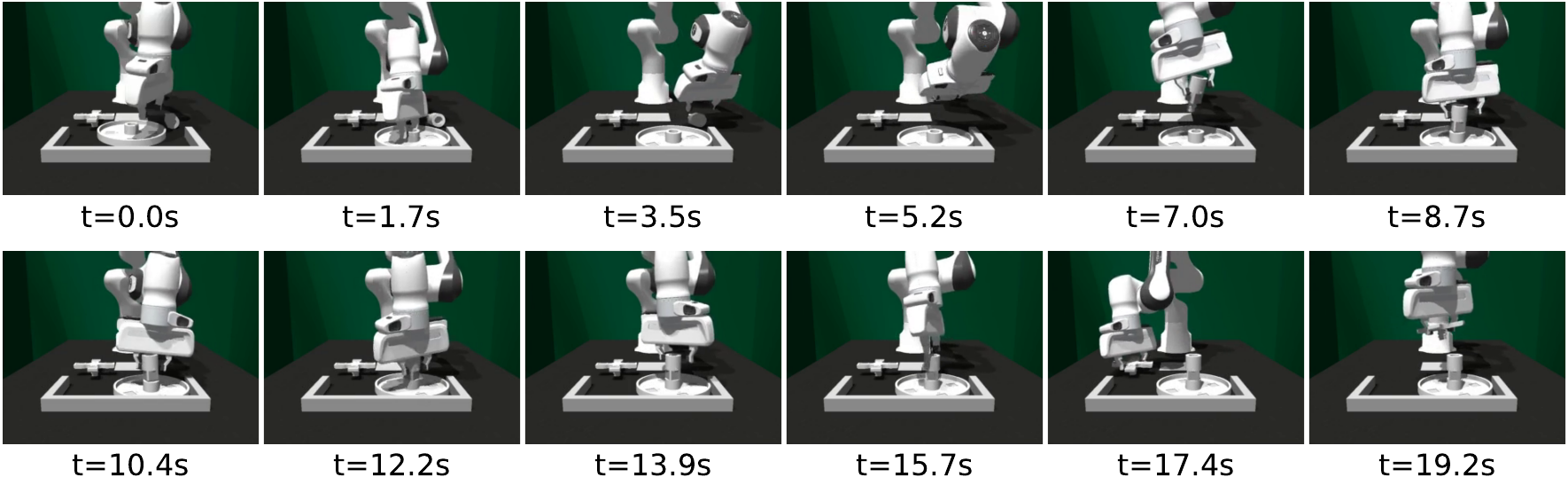}
\caption{A successful rollout example of the \texttt{Round-Table} assembly task}
\label{fig:video round-table success}
\end{figure}

\begin{figure}[H]
\centering
\includegraphics[width=\linewidth]{./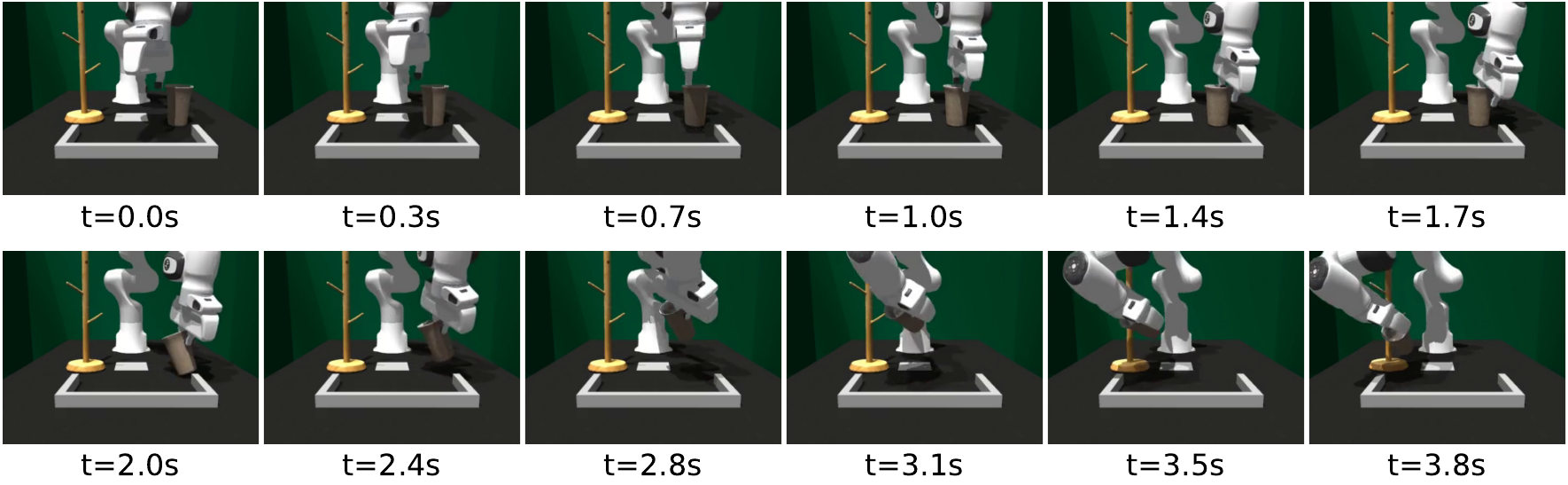}
\caption{A successful rollout example of the \texttt{Mug-Rack} task}
\label{fig:video mug-rack success}
\end{figure}

\begin{figure}[H]
\centering
\includegraphics[width=\linewidth]{./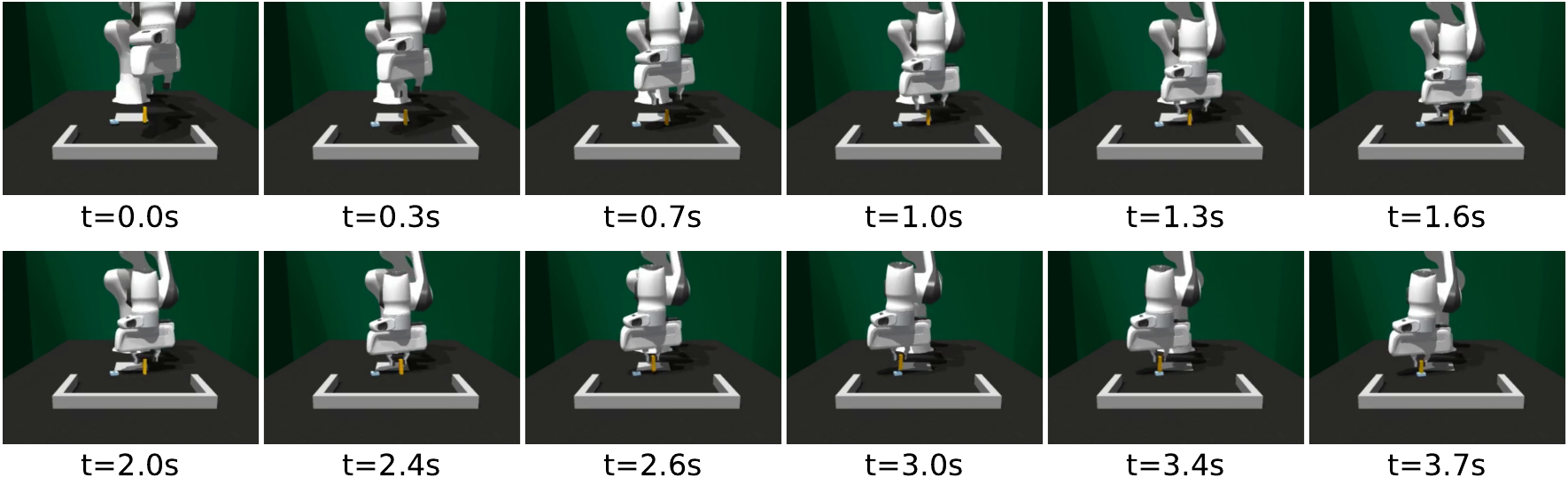}
\caption{A successful rollout example of the \texttt{Peg-in-Hole} task}
\label{fig:video peg-in-hole success}
\end{figure}

\begin{figure}[H]
\centering
\includegraphics[width=\linewidth]{./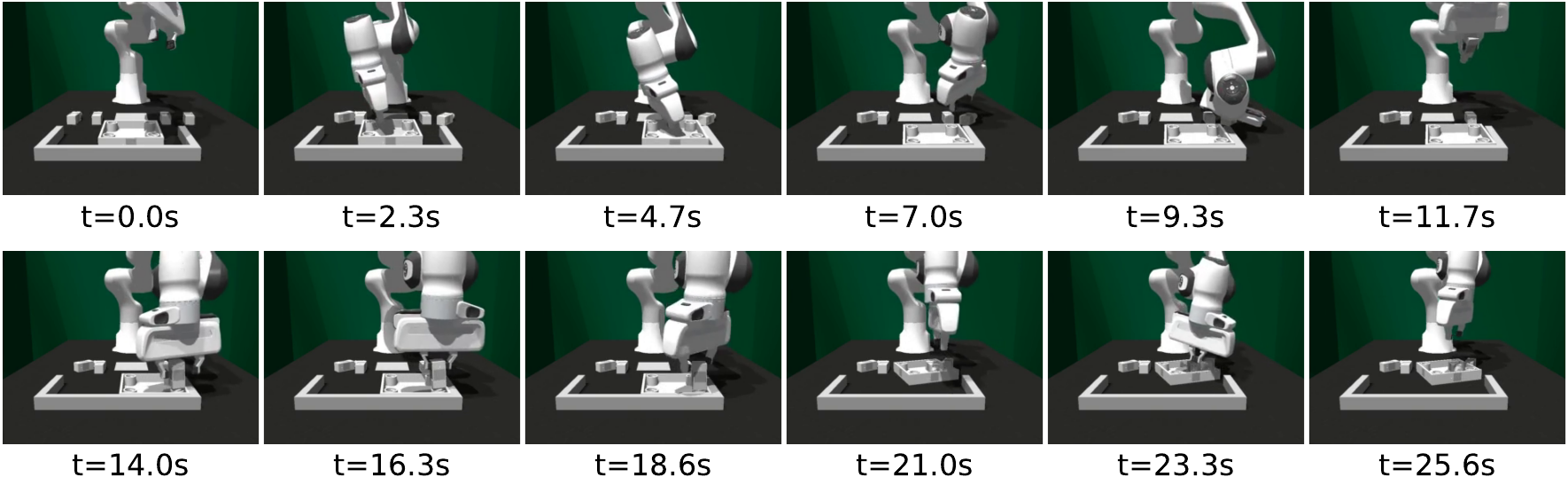}
\caption{A failed rollout example of the \texttt{One-Leg} (Med) furniture assembly task}
\label{fig:video one-leg med failure}
\end{figure}

\begin{figure}[H]
\centering
\includegraphics[width=\linewidth]{./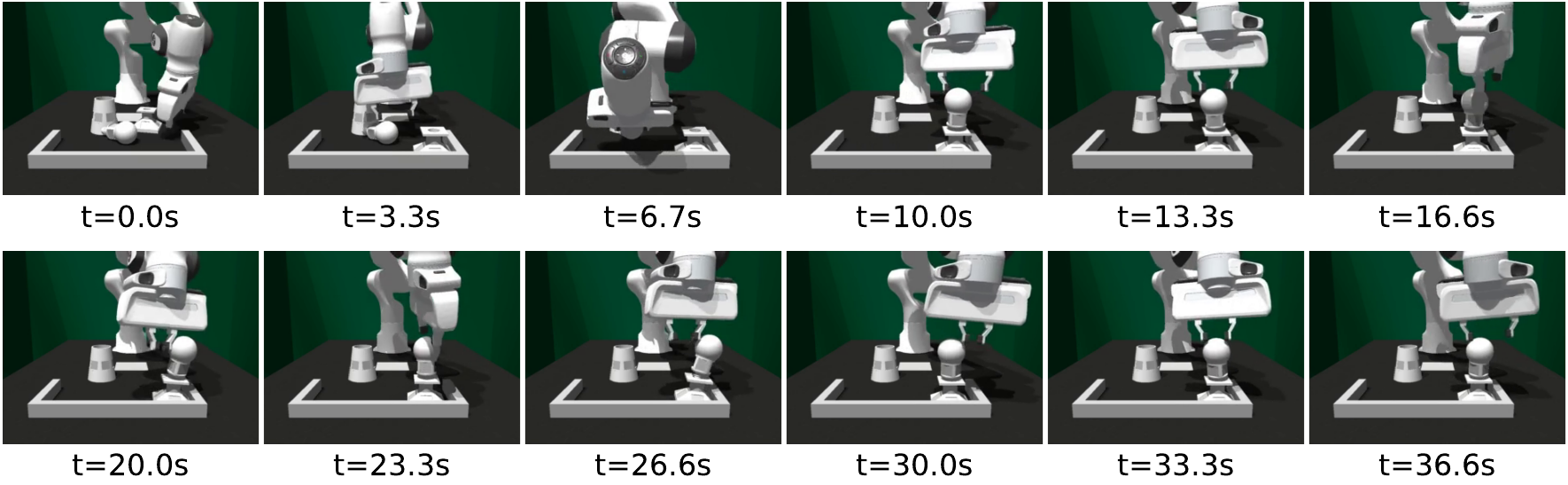}
\caption{A failed rollout example of the \texttt{Lamp} (Med) assembly task}
\label{fig:video lamp med failure}
\end{figure}

\begin{figure}[H]
\centering
\includegraphics[width=\linewidth]{./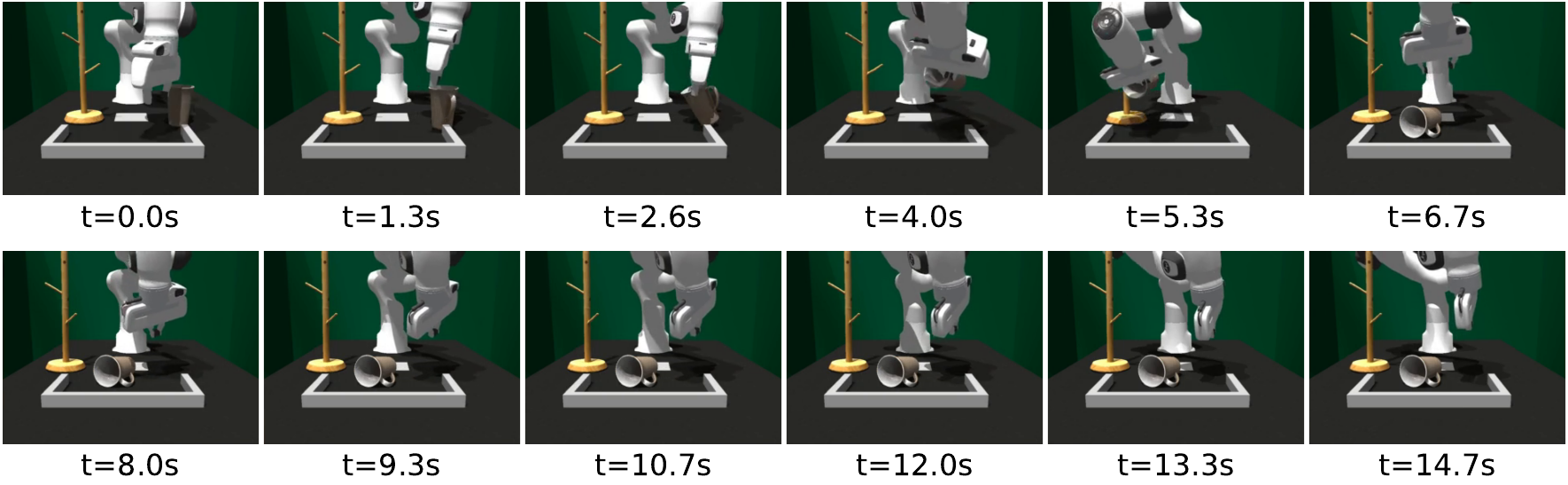}
\caption{A failed rollout example of the \texttt{Mug-Rack} assembly task}
\label{fig:video mug-rack failure}
\end{figure}

\begin{figure}[H]
\centering
\includegraphics[width=\linewidth]{./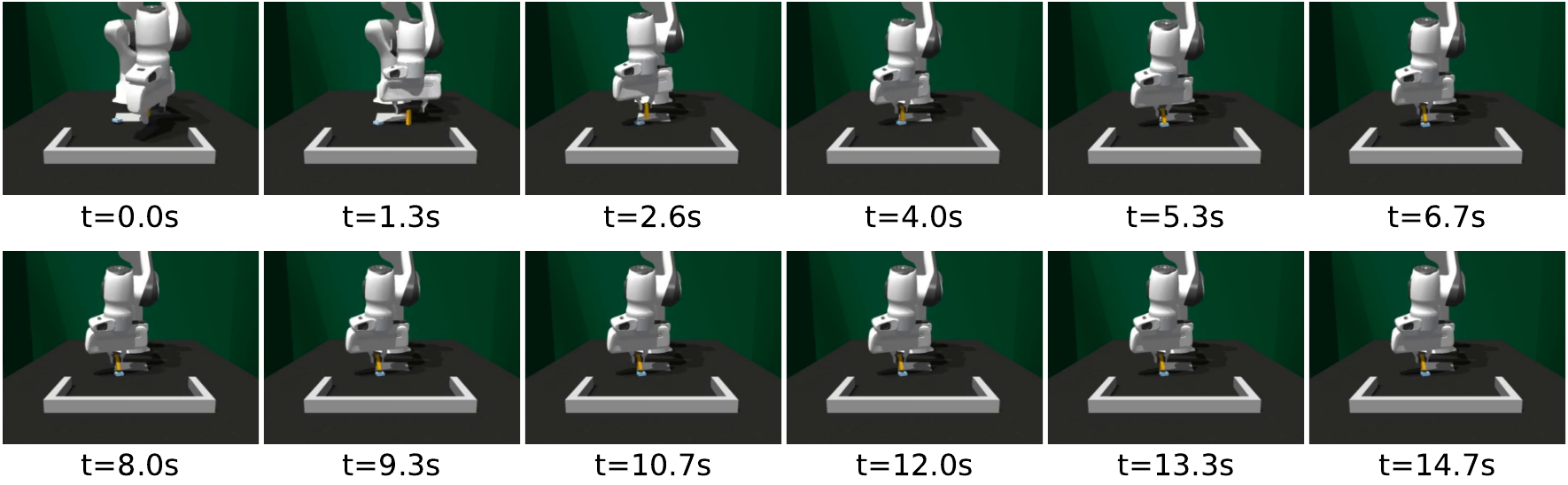}
\caption{A failed rollout example of the \texttt{Peg-in-Hole} task}
\label{fig:video peg-in-hole failure}
\end{figure}

\end{document}